\begin{document}

% If your paper is accepted and the title of your paper is very long,
% the style will print as headings an error message. Use the following
% command to supply a shorter title of your paper so that it can be
% used as headings.
%
%\runningtitle{I use this title instead because the last one was very long}

% If your paper is accepted and the number of authors is large, the
% style will print as headings an error message. Use the following
% command to supply a shorter version of the authors names so that
% they can be used as headings (for example, use only the surnames)
%
%\runningauthor{Surname 1, Surname 2, Surname 3, ...., Surname n}

\twocolumn[

\aistatstitle{$p$-Markov Gaussian Processes for Scalable and Expressive Online Bayesian Nonparametric Time Series Forecasting}
\aistatsauthor{ Yves-Laurent Kom Samo \\ ylks@robots.ox.ac.uk \And Stephen J. Roberts \\ sjrob@robots.ox.ac.uk }

\aistatsaddress{University of Oxford \And University of Oxford}  ]

\begin{abstract}
In this paper we introduce a novel online time series forecasting model we refer to as the $p$M-GP filter. We show that our model is equivalent to Gaussian process regression, with the advantage that both online forecasting and online learning of the hyper-parameters have a constant (rather than cubic) time complexity and a constant (rather than squared) memory requirement in the number of observations, without resorting to approximations. Moreover, the proposed model is expressive in that the family of covariance functions of the implied latent process, namely the spectral Mat\'{e}rn kernels, have recently been proven to be capable of approximating arbitrarily well any translation-invariant covariance function. The benefit of our approach compared to competing models is demonstrated using experiments on several real-life datasets.
\end{abstract}

\section{INTRODUCTION}
Interest in analysing and forecasting time series is thousands of years old. This field of research has witnessed two breakthroughs over the second-half of the twentieth century in the development of the Box-Jenkins methodology (\cite{box2015}) and the derivation of the Kalman filter (\cite{kalman1960}). While the ARIMA model underpinning the Box-Jenkins methodology has since become a standard in graduate curricula worldwide, the Kalman filter has found a great number of applications in addition to time series forecasting, including, but not limited to, object tracking, navigation and computer vision.

\subsection{Popular Approaches}
The Box-Jenkins methodology for analysing time series consists of three steps. Firstly, trends and seasonalities are removed from the training dataset by differentiating the time series iteratively\footnote{That is the following time series are constructed from the original time series $y_k$: $\Delta^{(1)} y_k := y_k - y_{k-1}$, \dots,  $\Delta^{(i+1)} y_k := \Delta^{(i)}y_k - \Delta^{(i)}y_{k-1}$ etc...} until an ergodic-stationary time series is found. Secondly, the ergodic-stationary time series is assumed to be a realisation of an ARMA process whose parameters are estimated, typically by maximizing the likelihood. Finally, the calibrated model is checked by analysing the sample autocorrelation of residuals.

Both the ARIMA model and the Kalman filter can be represented as discrete time \textit{linear Gaussian state space models} (see \cite[][chap. 3]{durbin12} or \cite[][chap. 10]{comkoop}). Discrete time state space models aim at inferring the state $x_k \in \mathbb{R}^p$ of a latent dynamic system, that is assumed to evolve according to Markovian dynamics, from some noisy observations $y_k  \in \mathbb{R}^q$. The models are characterised by a probability distribution for the initial latent state of the system $p(x_0)$, a Markovian transition dynamics $p(x_{k} \vert x_{k-1})$, and a measurement model $p(y_k \vert x_k)$. It is also assumed that the observations are independent from each other and from other latent states conditional on their associated latent states (i.e. $ \forall i \neq j ~y_i \perp y_j |x_i$ and $y_i \perp x_j |x_i$). The forecasting problem then consists of recursively determining $p(x_{k+n} \vert y_1, \dots, y_k)$ for some $n>0$.

In linear Gaussian state space models (LGSSM), the three distributions above are taken to be Gaussian, and the transition and measurement distributions have covariance matrices that do not depend on the state process. Moreover, it is assumed that 
\begin{align}
\text{E}\left( x_k \vert x_{k-1}\right)=F_kx_{k-1} + B_k u_k,  E\left(y_k \vert x_k \right) = H_k x_k, \nonumber
\end{align}
where $u_k$, $B_k$, $F_k$ and $H_k$ are given (or estimated off-line). Under this class of models the forecasting problem can be solved exactly and very efficiently using the Kalman filter equations. 
%The feasibility of the forecasting problem under this approach stems from the fact that $(x_k)_{k \in \mathbb{N}}$ and  $(y_k)_{k \in \mathbb{N}}$ are both discrete time Gaussian processes. 

LGSSMs can be extended in several ways. Alternatives have been proposed, including the popular \textit{extended Kalman filter}, that relax the linearity assumption in LGSSMs by postulating that 
\begin{align}
\text{E}\left( x_k \vert x_{k-1}\right)=f(x_{k-1}, u_k) \text{, } E\left(y_k \vert x_k \right) = h(x_k),\nonumber
\end{align}
for some (possibly nonlinear) functions $f$ and $h$, while retaining the Gaussian assumptions on the initial, transition and measurement distributions. Other approaches extend LGSSMs by relaxing the foregoing Gaussian assumptions. For instance, discrete measurement probability distributions such as the Poisson distribution or the negative binomial distribution would allow for count (as opposed to real-valued) observations (\cite{west85}), while leptokurtic probability distributions such as the student-t distribution or the Laplace distribution, used as measurement distribution, might improve the robustness of forecasts to outliers.  For a more comprehensive review of Box-Jenkins models and state space methods for time series analysis, we refer the reader to \cite{comkoop}, \cite{durbin12} and \cite{ham94}.

More generally, time series forecasting can be regarded as a regression problem, where one is interested in predicting future values and associated confidence bounds from historical observations. Gaussian process regression or GPR \cite[][chap. 2]{rasswill} provides a flexible Bayesian nonparametric framework for that purpose. The GPR approach to time series modelling consists of regarding the time series as arising from the sum of a continuous time Gaussian process $(z_t)_{t \geq 0}$ with mean function $m$ and covariance function $k_{\theta}$, and an independent Gaussian white noise: 
\begin{align}
&(z_t)_{t \geq 0} \sim \mathcal{GP}\left(m(.), k_{\theta}(.,.)\right),  ~  (\epsilon_t)_{t \geq 0} \sim \mathcal{GWN}(\sigma^2), \nonumber \\
&\forall t_i, ~ y_{t_i} = z_{t_i} + \epsilon_{t_i}\nonumber.
\end{align}
As a result, $(y_t)_{t \geq 0}$ is also a Gaussian process (GP), and the noise variance $\sigma^2$ and all other hyper-parameters can easily be inferred from some historical data $\{y_{t_0}, \dots, y_{t_k}\}$ by maximizing the corresponding multivariate Gaussian likelihood. Predictive distributions may then be obtained in closed-form. 

More recently, online \textit{passive-aggressive} algorithms have been proposed by \cite{pajmlr}, that allow for online linear and basis function regression with strong worst-case loss bounds.

\subsection{Limitations of Popular Approaches}
The critical assumption underpinning the Box-Jenkins methodology is that any time series, after enough iterative differentiations, will become \textit{ergodic-stationarity}, or more precisely will pass standard ergodic-stationarity statistical tests. The fundamental problem with this approach is that in practice, one will have access to finite samples that might not be sufficiently large or informative for the time series to pass any ergodic-stationarity test. It is in this spirit that \cite[][\S 3.10.1]{durbin12} noted that `in the economic and social fields, real series are never stationary however much differencing is done'. The ergodic-stationarity assumption is not a major limitation per se, but relying on the assumption that sufficient data have been collected to fully characterise the latent process (including testing for ergodic-stationarity) is a major limitation of the Box-Jenkins methodology. In that regards, \cite{kpss92} noted that `most economic time series are not very informative about whether or not there is a unit root'. The state space approach on the other hand does not require that the sample be informative enough to test for stationarity, which is why \cite{comkoop} and \cite{durbin12} argued it should be preferred to the Box-Jenkins methodology.
 
Additionally, the scalability of state space models is very appealing. However, the dynamics of the latent time series, characterised by $H_k$ ($h$ in the nonlinear case) is usually assumed to be known. Although this assumption is fairly mild in many engineering applications where the dynamics are derived from the laws of physics, when analysing time series arising from complex systems where the dynamics are not known, $H_k$ (resp. $h$) fully characterize the covariance function of the latent time series $(x_t)_{t \geq 0}$. Thus, hand-picking $H_k$ or $h$ would be problematic as the covariance function of the latent process should be flexibly learned from the data to appropriately uncover and exploit patterns. Unfortunately, no state space dynamics has been proposed in the literature, to the best of our knowledge, that guarantees that the implied covariance functions of the latent processes $(x_t)_{t \geq 0}$ can approximate arbitrarily well any stationary covariance function.
 
Families of covariance functions that are dense in the class of all continuous bounded covariance functions have recently been proposed by \cite{samo_gsk}, and can readily be used within the GPR framework. Unfortunately, with a cubic time complexity and a squared memory requirement, GPR scales poorly with the number of training samples. Sparse approximations have been proposed that scale linearly with the number of observations (\cite{quincand, sparsespectrum}). However, those methods are not good enough for streaming applications as they require loss of information through windowing to be of practical interest.

Passive-aggressive (PA) algorithms (\cite{pajmlr}) have constant time complexity and memory requirement for online linear regression problems. However, their kernelized versions have memory requirement and computational time that depend on the number of `support vectors', and might increase unboundedly with the number of observations (\cite{wang10}). Thus, PA algorithms in their original form are not appropriate for flexibly forecasting time series in a high throughput setting. \cite{wang10} have introduced PA algorithms with an additional memory or CPU `budget' constraint, but the authors restricted themselves to classification problems. Moreover, unlike probabilistic approaches, PA algorithms do not provide uncertainty around predictions. A Bayesian PA-like online learning approach has been proposed by \cite{shi14}. However, like all other kernel-PA algorithms, the kernel is assumed to be given and online learning of its parameters is not addressed by the authors.
 
%\subsection{Our Contributions}
\textbf{Our contibutions:} In this paper we build on the work of \cite{samo_gsk} to propose a state space model for analysing and forecasting time series that we prove is equivalent to GPR, under a family of kernels for the latent GP that is dense in the family of all stationary kernels, and that allows decoupling the flexibility of the kernel from the degree of smoothness of the latent GP. More importantly, we derive a novel scheme for online learning of hyper-parameters as a solution to a convex optimization problem, which happens to be a \textit{passive-aggressive} algorithm (\cite{pajmlr}) with \textit{stochastic gradient descent} updates (\cite{bottou98}). Overall, our model has constant time complexity and constant memory requirement, both for online learning of the hyper-parameters and for online prediction. 

The rest of the paper is structured as follows. In section \ref{sct:bk} we provide the intuition behind our approach and the related mathematical background. In section \ref{sct:mdl} we formally introduce our model. We illustrate that our approach outperforms competing approaches with some experiments in section \ref{sct:exp}. Finally, we discuss possible extensions and conclude in section \ref{sct:con}.

\section{INTUITION AND BACKGROUND}
\label{sct:bk}
In this paper we consider modelling real-valued time series. Our working assumption is that each sample $\{y_{t_0}, \dots, y_{t_N} \}$ of a time series arises as noisy observations of a `smooth'\footnote{By smooth we mean that the process is at least mean square continuous, and higher order derivatives might exist.} latent stochastic process $(z_t)_{t \geq 0}$ pertubated by an independent white noise $(\epsilon_t)_{t \geq 0}$:
\begin{align}
&\forall t, ~y_t = z_t + \epsilon_t, \label{eq:cond1}\\
&(\epsilon_t)_{t \geq 0} \sim \mathcal{WN}(\sigma^2), ~ (\epsilon_t)_{t \geq 0} \perp (z_t)_{t \geq 0}.\label{eq:cond2}
\end{align}
In financial econometrics applications for instance, the process $(z_t)_{t \geq 0}$ may represent the logarithm of the equilibrium or fair price of an asset, while $(\epsilon_t)_{t \geq 0}$ accounts for short term shocks, microstructure noise and so-called `fat-fingers'. More generally, $(z_t)_{t \geq 0}$ denotes the value of the time series of interest, after taking out hazards such as side effects, measurement noise, recording errors and so forth, accounted for by  $(\epsilon_t)_{t \geq 0}$.

Moreover, we will assume that the latent process $(z_t)_{t \geq 0}$ is a trend-stationary Gaussian process, i.e. a Gaussian process with a translation-invariant covariance function and an arbitrary mean function:
\begin{align}
(z_t)_{t \geq 0} \sim \mathcal{GP}(m(.), k(.)).\label{eq:cond3}
\end{align}
It is our working assumption that the class of trend-stationary Gaussian processes is large enough for our purpose. We refer the reader to \ref{app:hyp_disc} for a discussion on why this is a relatively mild assumption. From a Bayesian perspective, we are assuming \textit{a priori} that the process evolves about a deterministic trend, and that our \textit{prior} uncertainty on how the process might deviate from the trend is invariant by translation. In applications, the trend $m$ will typically be taken to lie in a parametric family of functions.

Furthermore, we assume that the white noise process $(\epsilon_t)_{t \geq 0}$ is Gaussian:
\begin{align}
(\epsilon_t)_{t \geq 0} \sim \mathcal{GWN}(\sigma^2).\label{eq:cond4}
\end{align}
We will discuss possible extensions to more robust noise models in section \ref{sct:con}. Our setup thus far is the same as GPR with a translation-invariant covariance function. Our next objective is to investigate whether we could exploit the scalability of the state space approach by providing an `equivalent' state space representation. More precisely, by `equivalent' we mean one in which the observable is the noisy time series value $y_t \in \mathbb{R}$, and for every collection of observation times $\{ t_0, \dots, t_N\}$, the joint distribution over $\left(y_{t_0}, \dots, y_{t_k}\right)$ as per the state space representation is the same as that implied by Equations (\ref{eq:cond1}) to (\ref{eq:cond4}). Thus, we need to construct an appropriate Markov process $(x_t)_{t \geq 0}, x_t \in \mathbb{R}^p$ that is related to $(z_t)_{t \geq 0}$ and from which the state space dynamics will be deduced. We will then derive the measurement dynamics from Equation  (\ref{eq:cond1}). As $(x_t)_{t \geq 0}$ should be a Markov process, intuitively $x_t$ should contain all information about present and past values of the latent time series $(z_t)_{t \geq 0}$ if we want the state space representation to be equivalent to Equations (\ref{eq:cond1}) to (\ref{eq:cond4}). Thus, it seems natural to consider that $(z_t)_{t \geq 0}$ is a coordinate process of the vector state process  $(x_t)_{t \geq 0}$.

\underline{Case 1}: $\forall t, ~ x_t = z_t$

If we take the state process to be the latent time series itself, $(z_t)_{t \geq 0}$ needs to be Markovian. However, this would imply that the latent time series will either not be mean square differentiable, or, as the following lemma states, it will almost surely be equal to the trend plus a constant, which might be too restrictive.
\begin{lemma}If  a real-valued trend-stationary Gaussian process with differentiable mean function is mean square differentiable and Markovian, then it has a constant covariance function (or equivalently it is almost surely equal to its mean function plus a constant).
\end{lemma}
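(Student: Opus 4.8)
The plan is to exploit the interplay between the Markov property and mean square differentiability at the level of the covariance function. First I would remove the trend: writing $z_t = m(t) + w_t$, the centred process $(w_t)_{t\geq 0}$ is a zero-mean stationary Gaussian process, and since $m$ is deterministic, $(z_t)$ is Markovian if and only if $(w_t)$ is, and likewise for mean square differentiability. If the stationary variance $k(0)$ vanishes the claim is immediate, so I assume $k(0) > 0$ and work with the correlation function $\rho(\tau) := k(\tau)/k(0)$, which is continuous because mean square differentiability implies mean square continuity.

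The key step is to translate the Markov property into a functional equation for $\rho$. For a Gaussian process the Markov property is equivalent to the conditional independence $z_s \perp z_u \mid z_t$ for all $s \leq t \leq u$, and for jointly Gaussian variables this is precisely the vanishing of the partial correlation; using stationarity it reads
\begin{align}
\rho(a+b) = \rho(a)\,\rho(b), \quad a,b \ge 0. \nonumber
\end{align}
I would then solve this Cauchy-type multiplicative equation. Writing $\rho(a) = \rho(a/2)^2 \ge 0$ shows $\rho$ is non-negative, and if $\rho$ vanished anywhere it would vanish on a sequence tending to $0$, contradicting $\rho(0)=1$ by continuity; hence $\rho > 0$ everywhere, $\log\rho$ is additive and continuous, and therefore $\rho(\tau) = e^{-\lambda\tau}$ for $\tau \ge 0$ with $\lambda \ge 0$ (the bound $\lvert\rho\rvert \le 1$ ruling out $\lambda < 0$). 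By evenness $k(\tau) = k(0)\,e^{-\lambda\lvert\tau\rvert}$, the Ornstein--Uhlenbeck covariance.

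Finally I would invoke the standard fact that a stationary Gaussian process is mean square differentiable only if its covariance is twice differentiable at the origin. The exponential covariance $e^{-\lambda\lvert\tau\rvert}$ has one-sided first derivatives $\mp\lambda$ at $0$, so it fails to be differentiable there unless $\lambda = 0$; mean square differentiability thus forces $\lambda = 0$, i.e. $k \equiv k(0)$ is constant. A constant covariance gives $\text{E}\left[(w_t - w_s)^2\right] = 2k(0) - 2k(0) = 0$, so $w_t = w_0$ almost surely and $z_t = m(t) + w_0$, which establishes the claim. I expect the main obstacle to be the careful justification of the functional equation from the Markov property together with the regularity arguments (non-negativity and continuity) needed to pin down the exponential form; once $\rho(\tau)=e^{-\lambda\lvert\tau\rvert}$ is in hand, the clash with differentiability is immediate.
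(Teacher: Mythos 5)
Your proof is correct, and it takes a genuinely different route from the paper's. You first classify all continuous stationary Gaussian Markov covariances: from the vanishing conditional covariance you extract the full multiplicative Cauchy equation $\rho(a+b)=\rho(a)\rho(b)$, rule out zeros of $\rho$ via $\rho(a)=\rho(a/2)^2$ and continuity at the origin, and solve it to obtain the Ornstein--Uhlenbeck kernel $k(\tau)=k(0)e^{-\lambda\vert\tau\vert}$ (Doob's classical characterization); mean square differentiability then forces $\lambda=0$ because the exponential kernel is not differentiable at the origin. The paper never solves a functional equation: it uses only the single midpoint identity $k(2h)-k(h)^2/k(0)=0$ (the conditional covariance of $z_{t+h}$ and $z_{t-h}$ given $z_t$), differentiates it twice at $h=0$ --- legitimate since mean square differentiability makes $k$ twice differentiable --- to get $k^{(2)}(0)=0$, and then, since $-k^{(2)}$ is the covariance function of the derivative process $(z^{(1)}_t)_{t\geq 0}$, Cauchy--Schwarz gives $k^{(2)}(h)^2\leq k^{(2)}(0)^2=0$, so $k$ is linear and, being bounded as a covariance, constant. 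Your route buys a stronger intermediate result (the complete OU classification, which needs only continuity of $k$, with smoothness invoked only at the very end), while the paper's route is shorter and more local, trading the functional-equation analysis for knowledge of the covariance structure of the mean square derivative. Both arguments close identically: a constant $k$ gives correlation one between all times, so $z_t-\text{E}(z_t)$ is almost surely constant. Two points worth making explicit in a polished write-up: the formula $\text{cov}(z_s,z_u\vert z_t)=k(u-s)-k(t-s)k(u-t)/k(0)$ is a Schur complement valid whenever $k(0)>0$, so no nondegeneracy caveat about partial correlations is needed; and for the last step, $\text{E}\left(\left((w_{t+h}-w_t)/h\right)^2\right)=2k(0)(1-e^{-\lambda h})/h^2\sim 2k(0)\lambda/h$ as $h\to 0$ gives a self-contained verification that $\lambda>0$ is incompatible with mean square differentiability.
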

\begin{proof}
See \ref{sct:only_smooth_mgp_}.
\end{proof}
Intuitively, the memorylessness of the Markov assumption conflicts with the memoryfulness of the differentiability assumption, so that the choice $x_t = z_t$ might only be appropriate when assuming that the smooth latent time series $(z_t)_{t \geq 0}$ is mean square continuous but not differentiable. An example covariance function yielding a continuous trend-stationary Markov Gaussian process is the Mat\'{e}rn-$\frac{1}{2}$ kernel $k_{\text{exp}}$ \cite[see][Appendix \S B.2]{rasswill}. The equivalent state space representation is easily derived from standard Gaussian identities as:
\begin{equation*}
\begin{cases}
z_{t_0} \sim \mathcal{N}\left(0, k_{\text{exp}}(0)\right),\\
z_{t_k} = m(t_k) + \alpha_k z_{t_{k-1}} +  \sqrt{k_{\text{exp}}(0)\left(1- \alpha_k^2\right)}\xi_{t_k}\\
y_{t_k} = z_{t_k} + \epsilon_{t_k}
\end{cases}
\end{equation*}
where $\alpha_k = \frac{k_{\text{exp}}(t_k, t_{k-1})}{k_{\text{exp}}(0)}$, and $(\xi_t)_{t \geq 0} \sim \mathcal{GWN}(1)$.

\underline{Case 2}: $\forall t, ~ x_t = \left(z_t, z_t^{(1)}, \dots, z_t^{(p)}\right)$

In many applications, we might be interested in postulating that the latent time series $(z_t)_{t \geq 0}$ is smoother, for instance $p$ times mean square differentiable. In this case, we derive our intuition for what might be good candidates to complete the state process into an appropriate Markov vector process from the theory of analytic functions underpinning implementations of scientific functions in most programming languages. Conceptually, the aforementioned theory implies that under mild conditions, the value of an infinitely smooth function and its derivatives $f(t_k), f^{(1)}(t_k), \dots, f^{(i)}(t_k), \dots$ at a given point $t_k$ are sufficient to fully characterise the entire function elsewhere on the domain. Although we only require $(z_t)_{t \geq 0}$ to be $p$ times differentiable, it is our hope that by augmenting the state process with the $p$ consecutive derivatives of $(z_t)_{t \geq 0}$, we could ensure that (i) each state variable $x_t$ contains all information about past values of the latent time series $(z_u)_{u \leq t}$ or equivalently the state process $(x_t)_{t \geq 0}$ is Markovian, (ii) the latent time series $(z_t)_{t \geq 0}$ is as smooth as desired, and (iii) the covariance function of the latent time series is not strongly restricted.

When $(z_t)_{t \geq 0}$ is a Gaussian process that is $p$ times continuously differentiable in the mean square sense, we denote as \textit{p-derivative Gaussian process} ($p$DGP) the vector Gaussian process $(x_t)_{t \geq 0}$ where $x_t=\left(z_t, z^{(1)}_t, \dots, z^{(p)}_t\right)$ and $(z^{(i)})_{t \geq 0}$ is the $i$-th order mean square derivative of  $(z_t)_{t \geq 0}$. We say that $(z_t)_{t \geq 0}$ is a $p$-Markov Gaussian process ($p$M-GP) when it is $p$ times continuously differentiable and the corresponding $p$DGP is Markovian.\footnote{A $0$M-GP ($p=0$) is a Gaussian Markov process.} Stationary $p$M-GPs have been extensively studied in the seminal paper by \cite{doob44}.\footnote{In Doob's paper a $p$M-GP is denoted a t.h.G.M.$_{(p-1)}$ process.} In particular, the fundamental theorem below \cite[see][Theorem 4.9 ii]{doob44} provides necessary and sufficient conditions on our covariance function $k$ for the augmented state process to be Markovian when the trend is constant.
\begin{theorem} A real-valued stationary Gaussian process $(z_t)_{t \geq 0}$ with integrable covariance function $k$ is $p$-Markov if and only if $k$ admits a spectral density of the form
\begin{equation}
\label{eq:pm_cond}
\forall s> 0, ~S(s) := \int k(\tau) e^{-2\pi i s\tau}d\tau = \frac{c}{|A(is)|^2},
\end{equation}
where $c>0$ and $A$ is a polynomial of degree $p+1$ with real-valued coefficients and no purely imaginary root.
\end{theorem}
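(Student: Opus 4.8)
The plan is to prove both implications through the spectral representation of the vector process $x_t=(z_t,z^{(1)}_t,\dots,z^{(p)}_t)$, exploiting that, for a stationary Gaussian process, mean-square differentiation acts on the spectral measure by multiplication by $2\pi i s$. Writing $R(\tau)=\mathrm{E}[x_{t+\tau}x_t^\top]$ for the matrix covariance of $x$, its entries are derivatives of $k$, namely $R_{ij}(\tau)=(-1)^j k^{(i+j)}(\tau)$, so the matrix spectral density $\hat R(s)=\int R(\tau)e^{-2\pi i s\tau}d\tau$ has $(i,j)$ entry $(2\pi i s)^i(-2\pi i s)^j S(s)$. In particular its top-left entry is $S(s)$ and its bottom-right entry is $(2\pi s)^{2p}S(s)$, the spectral density of $z^{(p)}$. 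These two observations are the bridge between the scalar object $S$ appearing in the statement and the matrix object whose structure encodes the Markov property.

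For sufficiency ($\Leftarrow$) I would start from $S(s)=c/|A(is)|^2$ and realise $z$ as the first coordinate of the stationary solution of the companion-form linear SDE $dx=Fx\,dt+b\,dW$, where $F$ is the companion matrix of $A$ and $b=(0,\dots,0,\sqrt c/a_{p+1})^\top$. The hypothesis that $A$ has no purely imaginary root is precisely the condition that $F$ has no eigenvalue on the imaginary axis, which guarantees existence and uniqueness of a two-sided stationary solution; a direct computation of its spectral density recovers $c/|A(is)|^2$ as the top-left entry, so by uniqueness of a Gaussian law from its covariance this solution's first coordinate is $z$ and the remaining coordinates are its successive mean-square derivatives. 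Being the solution of a linear SDE, $x$ is a Markov diffusion, hence $z$ is $p$-Markov. I would also check the bookkeeping that $\deg A=p+1$ makes $z$ exactly $p$ times (and not $p+1$ times) differentiable, since $(2\pi s)^{2j}S(s)$ is integrable iff $j\le p$, so the state has the correct dimension.

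For necessity ($\Rightarrow$) I would use that a stationary Gaussian Markov vector process has a conditional-mean propagator $\Phi(\tau)=R(\tau)R(0)^{-1}$ obeying the semigroup identity $\Phi(\tau_1+\tau_2)=\Phi(\tau_1)\Phi(\tau_2)$ (from the Markov property together with the tower rule), whence $\Phi(\tau)=e^{F\tau}$ and $R(\tau)=e^{F\tau}R(0)$ for $\tau\ge 0$, with $F$ a real matrix. Fourier-transforming the two-sided extension gives $\hat R(s)=(2\pi i sI-F)^{-1}R(0)+R(0)(-2\pi i sI-F^\top)^{-1}$, every entry of which is a rational function with common denominator $\det(2\pi i sI-F)\det(-2\pi i sI-F^\top)=|A(2\pi i s)|^2$, where $A:=\chi_F$ is the characteristic polynomial of $F$, of degree $p+1$ with real coefficients. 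In particular $S(s)=P(s)/|A(2\pi i s)|^2$ for some polynomial $P$, and the spectral density of $z^{(p)}$ equals $(2\pi s)^{2p}P(s)/|A(2\pi i s)|^2$.

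The decisive step, which I expect to be the main obstacle, is to show $P$ is constant. Here I would invoke that $z$ is genuinely $p$ times mean-square differentiable, so $z^{(p)}$ has finite variance, i.e. $(2\pi s)^{2p}P(s)/|A(2\pi i s)|^2$ is integrable over $\mathbb{R}$; since the denominator has degree $2p+2$ and no real root (because $S=\hat k$ is the continuous bounded transform of the integrable $k$, hence finite everywhere), integrability forces $2p+\deg P\le 2p$, that is $\deg P\le 0$. Thus $P\equiv c>0$, giving $S(s)=c/|A(2\pi i s)|^2$, and the absence of real roots of $|A(2\pi i s)|^2$ is exactly the no-purely-imaginary-root condition, with the factor $2\pi$ absorbed by rescaling the coefficients of $A$. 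Beyond this degree count, the two places demanding the most care are the rigorous justification of the exponential form $R(\tau)=e^{F\tau}R(0)$ (requiring mean-square continuity and invertibility of $R(0)$, i.e. non-degeneracy of the derivatives) and the precise convention relating $A(is)$ in the statement to $\chi_F(2\pi i s)$.
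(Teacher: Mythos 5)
You should first know that the paper contains no proof of this theorem: it is quoted verbatim from \cite{doob44} (Theorem 4.9 ii), and the only spectral argument the paper itself supplies is the two-line proof of the corollary for Mat\'{e}rn kernels. So your proposal cannot be compared to an in-paper argument; judged on its own merits, it follows the standard modern state-space realization route (companion SDE for sufficiency, matrix semigroup $R(\tau)=e^{F\tau}R(0)$ plus a degree count for necessity) rather than Doob's original prediction-theoretic argument, and the skeleton is sound. However, two steps would fail as written.

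First, in the sufficiency direction, your justification ``being the solution of a linear SDE, $x$ is a Markov diffusion'' breaks down precisely in the case the hypothesis allows: when $A$ has roots in the right half-plane, the two-sided stationary solution you invoke is anticipative and is \emph{not} a diffusion with respect to the forward filtration of $W$, so Markovianity does not follow from that one-liner (the conclusion is still true, but proving it requires computing the covariance of the anticipative solution, which is exactly what you skipped). The standard repair, which your sketch never mentions, is spectral factorization: since $A$ has real coefficients and no purely imaginary roots, reflecting its right-half-plane roots across the imaginary axis yields a real Hurwitz polynomial $\bar{A}$ of degree $p+1$ with $\vert \bar{A}(is)\vert^2 = \vert A(is)\vert^2$, after which the companion SDE has a causal stationary solution and your argument goes through. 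Second, in the necessity direction, your formula $\widehat{R}(s) = (2\pi i s I - F)^{-1}R(0) + R(0)(-2\pi i s I - F^T)^{-1}$ presupposes that all eigenvalues of $F$ have strictly negative real part; otherwise the one-sided integral $\int_0^\infty e^{F\tau}R(0)e^{-2\pi i s \tau}d\tau$ diverges, and deducing ``no purely imaginary root'' only at the end of the argument is circular. You must rule out $\mathrm{Re}$-nonnegative eigenvalues beforehand: boundedness of $R$ together with invertibility of $R(0)$ gives $\mathrm{Re} \leq 0$, and strict negativity follows because every entry $k^{(m)}(\tau)$, $m \leq 2p$, tends to $0$ by Riemann--Lebesgue (the moments $\int \vert s \vert^m S(s)ds$, $m \leq 2p$, are finite by H\"{o}lder interpolation between $\int S$ and $\int s^{2p}S$, the latter finite since $z^{(p)}$ has finite variance), forcing $e^{F\tau} = R(\tau)R(0)^{-1} \to 0$. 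The invertibility of $R(0)$, which you flag but do not close, is also needed here and is easy: a singular $R(0)$ would give a nontrivial a.s.\ linear relation among $z_t, z_t^{(1)}, \dots, z_t^{(p)}$, hence a constant-coefficient linear ODE for $z$, whose spectral measure is then purely atomic, contradicting the assumed integrable spectral density. With these three repairs your degree count $\deg P \leq 0$ and the endgame identifying $P \equiv c > 0$ are correct.
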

\begin{corollary}\label{coro:coro} A stationary Gaussian process with covariance function the Mat\'{e}rn kernel
\begin{equation*}
k_{\text{ma}}(\tau; k_0, l, \nu) = k_0 \frac{2^{1-\nu}}{\Gamma(\nu)} \left(\frac{\sqrt{2\nu} \vert \tau \vert}{l} \right)^\nu K_\nu \left(  \frac{\sqrt{2\nu} \vert \tau \vert}{l}\right), 
\end{equation*}
where $k_0, l >0$, $\Gamma$ is the Gamma function, $K_\nu$ is the modified Bessel function of second kind, and with $\nu = p + \frac{1}{2}, ~p \in \mathbb{N}$ is $p$-Markov.
\end{corollary}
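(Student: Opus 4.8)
The plan is to reduce the claim entirely to the spectral characterization of the preceding theorem: it suffices to exhibit the spectral density of $k_{\text{ma}}(\cdot\,; k_0, l, \nu)$ with $\nu = p + \tfrac{1}{2}$ in the form $c/|A(is)|^2$ for some $c>0$ and some real-coefficient polynomial $A$ of degree $p+1$ with no purely imaginary root, after first checking that the kernel is integrable so that Equation (\ref{eq:pm_cond}) applies. The whole corollary is then an instance of the theorem.

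First I would verify integrability. Using the large-argument asymptotics $K_\nu(x) \sim \sqrt{\pi/(2x)}\,e^{-x}$, the Matérn covariance decays like $|\tau|^{\nu - 1/2}\,e^{-\sqrt{2\nu}\,|\tau|/l}$ as $|\tau| \to \infty$, hence is exponentially decaying and integrable, and its spectral density $S$ is well defined. Next I would compute $S$. For half-integer smoothness $\nu = p + \tfrac{1}{2}$ the kernel is a polynomial in $|\tau|$ times $e^{-\sqrt{2\nu}\,|\tau|/l}$, whose Fourier transform is the rational function
\begin{equation*}
S(s) \;\propto\; \left( \frac{2\nu}{l^2} + 4\pi^2 s^2 \right)^{-(p+1)},
\end{equation*}
the exponent being $-(\nu + \tfrac{1}{2}) = -(p+1)$; one obtains this either by a direct transform of the polynomial-times-exponential or by specializing the standard one-dimensional Matérn spectral density.

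Then I would factor and identify $A$. Writing $\lambda := \sqrt{2\nu}/l > 0$, we have $\tfrac{2\nu}{l^2} + 4\pi^2 s^2 = \lambda^2 - (2\pi i s)^2 = (\lambda + 2\pi i s)(\lambda - 2\pi i s)$, so setting $A(z) := (2\pi z + \lambda)^{p+1}$ and using that $A$ has real coefficients (hence $\overline{A(is)} = A(-is)$) yields
\begin{equation*}
|A(is)|^2 = A(is)\,A(-is) = \left(\lambda^2 + 4\pi^2 s^2\right)^{p+1},
\end{equation*}
which matches $S(s) = c/|A(is)|^2$ for the appropriate normalizing constant $c>0$. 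The polynomial $A$ has real coefficients, degree exactly $p+1$, and its only root $z = -\lambda/(2\pi)$ is real and nonzero since $\lambda > 0$; a nonzero real number is not purely imaginary, so $A$ has no purely imaginary root. All hypotheses of the theorem are thus satisfied, and the process is $p$-Markov.

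The main obstacle is the spectral-density computation itself and, more delicately, keeping track of the $2\pi$ factors imposed by the $e^{-2\pi i s \tau}$ convention of Equation (\ref{eq:pm_cond}). The crucial structural point, which the factorization makes transparent, is that $A$ is evaluated at $is$ rather than at $s$: this is exactly what converts the manifestly even, strictly positive denominator $(\lambda^2 + 4\pi^2 s^2)^{p+1}$ into the squared modulus of a \emph{real}-coefficient polynomial whose roots sit off the imaginary axis, as the theorem demands.
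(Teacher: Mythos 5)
Your proposal is correct and follows essentially the same route as the paper's proof: both invoke the spectral characterization theorem by writing the Mat\'{e}rn-$(p+\tfrac{1}{2})$ spectral density $c/(\alpha + 4\pi^2 s^2)^{p+1}$ as $c/\vert(\sqrt{\alpha} + 2\pi i s)^{p+1}\vert^2$, your $A(z) = (2\pi z + \lambda)^{p+1}$ with $\lambda = \sqrt{2\nu}/l$ being exactly the paper's factorization. You are merely more explicit about details the paper leaves implicit (integrability of the kernel, the $2\pi$ conventions, and the check that the unique root $-\lambda/(2\pi)$ is real and nonzero, hence not purely imaginary), which is a welcome tightening rather than a different argument.
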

\begin{proof}
The spectral density of a Mat\'{e}rn-$(p+\frac{1}{2})$ covariance function is of the form 
\[\frac{c}{(\alpha + 4\pi^2 s^2)^{p+1}}= \frac{c}{\vert (\sqrt{\alpha} + 2\pi i s)^{p+1}\vert^2}\] 
with $c, \alpha>0$ \cite[see][]{rasswill}.
\end{proof}
Before deriving the state space representation of trend-stationary GPs with Mat\'{e}rn-$(p+\frac{1}{2})$ covariance functions, we recall that a $p$DGP is a vector Gaussian process and that in the trend-stationary case
\begin{equation*}
\forall ~0 \leq i, j \leq p, ~\text{cov}\left(z_u^{(i)}, z_v^{(j)} \right) = (-1)^j k^{(i+j)}(u-v),
\end{equation*}
where we use the superscript $^{(i)}$ to denote $i$-th order derivative or the orginal function (or process) when $i=0$ \cite[see][\S 2.6]{adlertaylor}. If we further denote 
\[K_{u,v} := \left\{\text{cov}\left(z_u^{(i)}, z_v^{(j)} \right)\right\}_{0 \leq i,j \leq p}\]
the covariance matrix between $x_u$ and $x_v$, as
\[K_{u\vert v} := K_{u, u} - K_{u, v} K_{v, v}^{-1}K_{v, u}\]
the auto-covariance matrix of $x_u$ conditional on $x_v$, and as $L_{u\vert v}$ any squared matrix satisfying\footnote{The Cholesky factorisation and the SVD provide such a decomposition.}
\[K_{u\vert v} = L_{u\vert v}L_{u\vert v}^T,\]
then we obtain the following equivalent LGSSM representation (see \ref{sct:pmat_lgssm} for details):
\begin{equation*}
\begin{cases}
x_{t_0} \sim \mathcal{N}(0, K_{t_0, t_0}) \\
x_{t_k} = F_{t_k} x_{t_{k-1}} + L_{t_k \vert t_{k-1}}\xi_{t_k}\\
y_{t_k} = m(t_k) + H^Tx_{t_k} + \epsilon_{t_k}
\end{cases}
\end{equation*}
with $F_{t_k} = K_{t_k, t_{k-1}} K_{t_{k-1}, t_{k-1}}^{-1}, ~H=(1, 0, \dots, 0),$ and where $(\xi_t)_{t \geq 0}$ is now a $(p+1)$-dimensional standard Gaussian white noise. 

The idea of using derivative information to construct a scalable alternative to GPs for MCMC inference has recently been developed in \cite{samo_sgp}. The idea of representing GPR as a state space model with a Markovian dynamics involving derivatives has been explored by \cite{sarkka13},  where the authors proposed approximating the spectral density of a given covariance function by a function in the form of Equation (\ref{eq:pm_cond}). Unfortunately, deriving such approximations might be tedious in certain cases. More importantly, postulating that the covariance function has spectral density of the form Equation (\ref{eq:pm_cond}) is still not flexible enough as it creates a link between the degree of differentiability of the latent process and the number of modes of the spectral density. To see why, we note that critical points (or local extrema) of $\frac{c}{|A(is)|^2}$ and $|A(is)|^2$ are the same, and that $|A(is)|^2$ being a polynomial of degree $2p+2$ in $s$, it can have at most $2p +1$ critical points. This means that if for instance we would like the latent function to be at most once differentiable ($p=1$), the approach of \cite{sarkka13} would restrict the spectral density to have at most $3$ modes, rather than flexibly learning the number of modes from the data. This also holds when the spectral density is approximated by a rational function in the extension suggested by \cite{sarkka13}, as the degree of the numerator needs to be smaller than the degree of the denominator, and the later is linked to the degree of differentiability of the latent GP.

Families of kernels, namely \textit{spectral Mat\'{e}rn kernels} (\cite{samo_gsk}), do exist that are dense in the family of all stationary covariance functions, and allow postulating or learning the degree of differentiability of the corresponding GP. In the next section, we build on the state space representation of $p$-Markov Mat\'{e}rn Gaussian processes to provide an exact state space representation of our time series model (Equations (\ref{eq:cond1}) to (\ref{eq:cond4}))  when the latent time series $(z_t)_{t \geq 0}$ is assumed to have a \textit{spectral Mat\'{e}rn-($p+\frac{1}{2}$)} covariance function (\cite{samo_gsk}). Moreover, we derive a novel \textit{passive-aggressive} algorithm for online learning of the hyper-parameters of the state space model, and that has constant memory requirement and constant time complexity.

\section{OUR MODEL}
\label{sct:mdl}
We start by introducing a state space time series model we refer to as the $p$-Markov Gaussian process filter ($p$M-GP filter) that we prove is equivalent to Gaussian process regression under a \textit{spectral Mat\'{e}rn} kernel.
\subsection{The $p$-Markov Gaussian Process Filter}
\begin{definition}\label{def:pm-gpf}($p$M-GP filter) We denote $p$-Markov Gaussian process filter ($p$M-GP filter) a state space model of the form:
\begin{equation*}
\begin{cases}
\perp \left\{{}^{i}_{c}x_{t_0}, {}^{i}_{s}x_{t_0}\right\}_{i=0}^n, \perp \left\{({}^i_{c}\xi_{t})_{t \geq 0}, ({}^i_{s}\xi_{t})_{t \geq 0}\right\}_{i=0}^n \\
\forall i, ~ {}^i_{c}x_{t_{k-1}} \perp {}^i_{c}\xi_{t_k},~ {}^i_{s}x_{t_{k-1}} \perp {}^i_{s}\xi_{t_k} \\
\forall i, ~ {}^i_{c}x_{t_{k}} \perp \epsilon_{t_k},~ {}^i_{s}x_{t_{k}} \perp \epsilon_{t_k} \\
\forall i, ~{}^i_{c}x_{t_0}, {}^i_{s}x_{t_0} \sim \mathcal{N}(0, {}^iK_{t_0, t_0})\\
\forall i, ~{}^i_{c}x_{t_k} = {}^iF_{t_k} {}^i_{c}x_{t_{k-1}}  + {}^iL_{t_k \vert t_{k-1}}{}^i_{c}\xi_{t_k}\\
\forall i, ~{}^i_{s}x_{t_k} = {}^iF_{t_k} {}^i_{s}x_{t_{k-1}}  + {}^iL_{t_k \vert t_{k-1}}{}^i_{s}\xi_{t_k}\\
y_{t_k} = H^T\sum_{i=0}^{n} \left(\cos(\omega_i t_k){}^i_{c}x_{t_k} + \sin(\omega_i t_k){}^i_{s}x_{t_k} \right)\\
~~~~~ + m(t_k) +  \epsilon_{t_k}
\end{cases}
\end{equation*}
where $\perp$ denotes mutual independence,  $({}^i_{c}\xi_{t})_{t \geq 0}$ and $({}^i_{s}\xi_{t})_{t \geq 0}$ are $(p+1)$-dimensional standard Gaussian white noises, $(\epsilon_t)_{t \geq 0}$ is a scalar Gaussian white noise with variance $\sigma^2$, $^{i}K_{u, v}$ is the cross-covariance matrix of a $p$DGP with Mat\'{e}rn kernel $k_{\text{ma}}(\tau; k_{0i}, l_i, p+\frac{1}{2})$, and where $H, {}^iF_{t_k}, {}^iL_{t_k \vert t_{k-1}}$ are as in the previous section (replacing $K_{u, v}$ by $^{i}K_{u,v}$).
\end{definition}
The following proposition establishes that the $p$M-GP filter is equivalent to GPR under a spectral Mat\'{e}rn kernel,  and a possibly non-constant mean function.
\begin{proposition}\label{prop:eqv}Let $(\hat{z}_t)_{t \geq 0}$ be a trend-stationary Gaussian process with mean function $m$ and \textit{spectral Mat\'{e}rn} covariance function
\begin{equation}
k_{\text{sma}}(\tau) = \sum_{i=0}^{n} k_{\text{ma}}\left(\tau; k_{0i}, l_i, p + \frac{1}{2}\right)\cos(\omega_i \tau).
\end{equation}
Let $(\hat{\epsilon}_t)_{t \geq 0}$ be a Gaussian white noise with variance $\sigma^2$ that is independent from $(\hat{z}_t)_{t \geq 0}$, and $(\hat{y}_t)_{t \geq 0}$ the process defined as \[\forall t \geq 0, ~\hat{y}_t = \hat{z}_t + \hat{\epsilon}_t.\]
Finally, let $(y_t)_{t \geq 0}$ be the observation process of the $p$M-GP filter with the same parameters $m, n, p, \sigma, \{k_{0i}, l_i, \omega_i\}_{i=0}^n$. Then, the processes $(y_t)_{t \geq 0}$ and $(\hat{y}_t)_{t \geq 0}$ have the same law, or equivalently:
\[\forall t_0 < \dots < t_N, ~ (y_{t_0}, \dots, y_{t_N}) \sim (\hat{y}_{t_0}, \dots, \hat{y}_{t_N}).\]
\end{proposition}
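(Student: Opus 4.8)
The plan is to exploit that both $(y_t)_{t\geq 0}$ and $(\hat y_t)_{t\geq 0}$ are Gaussian, so that equality in law reduces to matching means and covariances. For any finite collection $t_0 < \dots < t_N$, the vector $(\hat y_{t_0},\dots,\hat y_{t_N})$ is jointly Gaussian by construction, while $(y_{t_0},\dots,y_{t_N})$ is a fixed linear image (plus the deterministic trend $m$) of the jointly Gaussian initial states and noises of Definition \ref{def:pm-gpf}, hence also jointly Gaussian. It therefore suffices to verify $\text{E}(y_t)=\text{E}(\hat y_t)$ and $\text{cov}(y_u,y_v)=\text{cov}(\hat y_u,\hat y_v)$ for all $u,v$.

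First I would set ${}^i_{c}z_t := H^T\,{}^i_{c}x_{t}$ and ${}^i_{s}z_t := H^T\,{}^i_{s}x_{t}$, i.e. the first coordinate of each state vector. By Corollary \ref{coro:coro}, the Mat\'{e}rn-$(p+\tfrac{1}{2})$ kernel $k_i(\tau):=k_{\text{ma}}(\tau;k_{0i},l_i,p+\tfrac{1}{2})$ yields a $p$-Markov Gaussian process, so the associated $p$DGP is Markovian and the state-space recursion of the previous section (with ${}^iF$ and ${}^iL$ built from ${}^iK$) reproduces exactly the joint law of that $p$DGP. In particular each of $({}^i_{c}z_t)$ and $({}^i_{s}z_t)$ is a zero-mean stationary Gaussian process with $\text{cov}({}^i_{c}z_u,{}^i_{c}z_v)=\text{cov}({}^i_{s}z_u,{}^i_{s}z_v)=k_i(u-v)$. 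Writing $\tilde z_t := \sum_{i=0}^{n}\bigl(\cos(\omega_i t)\,{}^i_{c}z_t + \sin(\omega_i t)\,{}^i_{s}z_t\bigr)$ we have $y_t = m(t)+\tilde z_t+\epsilon_t$, and since all components have mean zero, $\text{E}(y_t)=m(t)=\text{E}(\hat y_t)$.

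Next I would compute $\text{cov}(\tilde z_u,\tilde z_v)$ using the mutual-independence hypotheses of Definition \ref{def:pm-gpf}: distinct indices $i$ are driven by independent initial states and noises, and within a fixed $i$ the cosine and sine components use the independent noises ${}^i_{c}\xi$ and ${}^i_{s}\xi$, so every cross term between distinct components vanishes. What survives is
\begin{equation*}
\text{cov}(\tilde z_u,\tilde z_v)=\sum_{i=0}^{n} k_i(u-v)\bigl[\cos(\omega_i u)\cos(\omega_i v)+\sin(\omega_i u)\sin(\omega_i v)\bigr].
\end{equation*}
Applying the angle-subtraction identity $\cos(\omega_i u)\cos(\omega_i v)+\sin(\omega_i u)\sin(\omega_i v)=\cos(\omega_i(u-v))$ collapses the bracket and gives exactly $\sum_{i=0}^{n}k_i(u-v)\cos(\omega_i(u-v))=k_{\text{sma}}(u-v)$.

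Finally, since $(\epsilon_t)_{t\geq 0}$ is a variance-$\sigma^2$ Gaussian white noise independent of every state variable, it contributes $\sigma^2$ to $\text{cov}(y_u,y_v)$ precisely when $u=v$ and nothing to the mean, so $\text{cov}(y_u,y_v)=k_{\text{sma}}(u-v)+\sigma^2\mathbf{1}[u=v]$, which coincides with $\text{cov}(\hat y_u,\hat y_v)$ because $\hat z$ has covariance $k_{\text{sma}}$ and $\hat\epsilon$ is the matching independent noise. Having matched both the mean and the full covariance of two Gaussian processes, I conclude they share all finite-dimensional distributions, yielding the claimed equality in law. I expect the only non-routine point to be the appeal in step two, namely certifying that the linear recursion reproduces the \emph{joint} (not merely the marginal) law of the Mat\'{e}rn $p$DGP; the remainder is independence bookkeeping and a trigonometric identity.
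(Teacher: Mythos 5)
Your proof is correct and takes essentially the same route as the paper: it decomposes the observation process into the trend plus independent cosine- and sine-modulated first coordinates of Mat\'{e}rn $p$DGP states, matches means, and collapses the covariance via $\cos(\omega_i u)\cos(\omega_i v)+\sin(\omega_i u)\sin(\omega_i v)=\cos\left(\omega_i(u-v)\right)$, with the noise contributing $\sigma^2$ on the diagonal. The one step you flag as non-routine---that the linear recursion reproduces the \emph{joint} law of each Mat\'{e}rn $p$DGP---is precisely what the paper delegates to its state-space derivation in \ref{sct:pmat_lgssm}, where Corollary \ref{coro:coro} supplies the Markov property needed to factorise $p\left(x_{t_0}, \dots, x_{t_T}\right)$ into one-step transitions, so your argument and the paper's coincide in substance.
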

\begin{proof}
See \ref{sct:pm_gp_fil}.
\end{proof}
The above proposition has profound implications. Firstly, the $p$M-GP filter is the first equivalent state space representation of the Gaussian process regression model (on a unidimensional input space) under a family of covariance functions that is dense in the family of all stationary covariance functions. This representation allows combining the great flexibility of GPR, with the unmatched scalability of state space models, without resorting to approximations. Secondly, the model allows controlling the degree of differentiability of the latent time series independently from the flexibility of the covariance function of the latent time series. Finally, unlike models such as ARIMA that assume equally spaced observations, as any GPR model, the $p$M-GP filter is inherently asynchronous and naturally copes with missing data.
\subsection{Solution to the Forecasting Problem}
We note that the $p$M-GP filter can be rewritten as 
\begin{equation}
\label{eq:solu}
\begin{cases}
\bm{x}_{t_0} \sim  \mathcal{N}(0, \bm{K}_{t_0, t_0})\\
\bm{x}_{t_k} = \bm{F}_{t_k}\bm{x}_{t_{k-1}} + \bm{L}_{t_k \vert t_{k-1}} \bm{\xi}_{t_k}\\
y_{t_k} = m(t_k) + \bm{H}_{t_k}^T \bm{x}_{t_k} +  \epsilon_{t_k}.
\end{cases}
\end{equation}
Here, $\bm{x}_{t_k}$ (resp. $\bm{\xi}_{t_k}$) is the $2(p+1)(n+1)$ vector obtained by stacking up the the vectors $\{ \dots, {}^i_{c}x_{t_k}, {}^i_{s}x_{t_k}, \dots\}$ (resp. $\{ \dots, {}^i_{c}\xi_{t_k}, {}^i_{s}\xi_{t_k}, \dots\}$). Similarly, $\bm{K}_{t_0, t_0}$ (resp. $\bm{F}_{t_k}, \bm{L}_{t_k \vert t_{k-1}}, \bm{K}_{t_k \vert t_{k-1}}$) is the $2(p+1)(n+1) \times 2(p+1)(n+1)$ block diagonal matrix whose $2i$-th and $(2i + 1)$-th diagonal blocks are both ${}^iK_{t_0, t_0}$ (resp. ${}^iF_{t_k}, {}^iL_{t_k \vert t_{k-1}}, {}^iK_{t_k \vert t_{k-1}}$). Finally, $\bm{H}_{t_k}$ is the $2(p+1)(n+1)$ vector with coordinates $0$ except at indices multiple of $p+1$, and $\forall i \in [0 \dots n], ~\bm{H}_{t_k}[2i(p+1)] = \cos(\omega_i t_k)$ and $\bm{H}_{t_k}[(2i+1)(p+1)] = \sin(\omega_i t_k)$.

Thus the $p$M-GP filter is a LGSSM, and the forecasting problem can be solved exactly, iteratively and in closed-form, and with memory requirement and time complexity both constant in the total number of observations. Using standard Kalman filter and Gaussian processes techniques (see \ref{sct:solu_fill}) we obtain:
\begin{equation}
\label{eq:solu2}
\begin{cases}
\forall t> t_{k-1}, ~\bm{x}_t \vert y_{t_0 : t_{k-1}}  &\sim  \mathcal{N}(\bm{m}_t^{-}, \bm{P}_t^{-})\\
\forall t> t_{k-1}, ~y_t \vert y_{t_0 : t_{k-1}}  &\sim  \mathcal{N}(m(t) + \bm{H}_t^T\bm{m}_t^{-}, v_t^{-})\\
\forall t> t_{k-1}, ~z_t \vert y_{t_0 : t_{k-1}}  &\sim  \mathcal{N}(m(t) + \bm{H}_t^T\bm{m}_t^{-},v_t^{-}-\sigma^2 )\\
\forall t \geq  t_{k-1}, ~\bm{x}_t \vert  y_{t_0 : t} &\sim \mathcal{N}(\bm{m}_t, \bm{P}_t)\\
\forall t \geq  t_{k-1}, ~z_t \vert  y_{t_0 : t} &\sim \mathcal{N}(m(t) + \bm{H}_t^T\bm{m}_t, v_t)
\end{cases}
\end{equation}
with $y_{t_0: t_{k-1}} = \{y_{t_0}, \dots, y_{t_{k-1}}\}$, $z_t = m(t) + \bm{H}_t^T \bm{x}_t$ and
\begin{align}
&\underline{\text{Prediction step:}}\nonumber\\
&\bm{m}_{t_0}^{-} = 0, ~ \bm{P}_{t_ 0}^{-} = \bm{K}_{t_0, t_0} \label{eq:pred} \\
&\forall k \geq 1, t> t_{k-1}
\begin{cases}
\bm{m}_t^{-} &= \bm{F}_t \bm{m}_{t_{k-1}}\\
\bm{P}_t^{-} &= \bm{F}_t \bm{P}_{t_{k-1}} \bm{F}_t^T + \bm{K}_{t \vert t_{k-1}}
\end{cases}\nonumber \\
&\underline{\text{Update step:}}\nonumber\\
&\forall t
\begin{cases}
v_t^{-} &= \bm{H}_t^T \bm{P}_{t}^{-} \bm{H}_t + \sigma^2 \\
\bm{e}_t^{-} &= y_t - m(t) - \bm{H}_t^T\bm{m}_t^{-} \\
\bm{G}_t &= \frac{1}{v_t^{-}}\bm{P}_t^{-}\bm{H}_t \\
\bm{m}_t &= \bm{m}_t^{-} + \bm{e}_t^{-}\bm{G}_t \\
\bm{P}_t &= \bm{P}_t^{-} - v_t^{-}\bm{G}_t\bm{G}_t^T\\
v_t &= \bm{H}_t^T\bm{P}_t\bm{H}_t
\end{cases}.\label{eq:update}
\end{align}
\subsection{Online Learning of Hyper-Parameters}
Noting that $y_{t_0} \sim \mathcal{N}(m(t_0), v_{t_0}^{-})$, and that 
\begin{equation*}
\log p(y_{t_0:t_N}) = \log p(y_{t_0}) + \sum_{k=1}^T \log p(y_{t_k} \vert y_{t_0:t_{k-1}}),
\end{equation*}
it follows that maximum likelihood inference of the hyper-parameters of $m$, $\sigma$, and $\{k_{0i}, l_i, \omega_i\}_{i=0}^n$ may be achieved with linear time complexity $\mathcal{O}(T)$ and with constant memory requirement using Equations (\ref{eq:solu2}, \ref{eq:pred}, \ref{eq:update}). However, to be of practical interest in streaming applications, this time complexity requires loss of information through windowing, which might be detrimental to forecasting performance. We herein propose an alternative online approach that has constant time complexity.

So far we have assumed that the hyper-parameters are constant over time. However, if we extend the model to iteration-specific hyper-parameters, the solution to the forecasting problem (Equations (\ref{eq:solu2}, \ref{eq:pred}, \ref{eq:update})) will provably remain unchanged. The intuition behind our approach to online learning of the hyper-parameters is borrowed from maximum likelihood inference when the hyper-parameters are constant, and online passive-aggressive algorithms for regression, classification and semi-supervised learning (\cite{pajmlr,wang10,chang10}).  Let $\bm{\theta}_{t_k}$ be the vector of hyper-parameters (or log hyper-parameters for positive hyper-parameters) prevailing at time $t_k$. We recall that when hyper-parameters are constant, we may write
\begin{align}
\log p_{{\bm{\theta}}}\left(y_{t_0:t_T}\right) = \log p_{{\bm{\theta}}}\left(y_{t_0}\right) + \sum_{k=1}^{T}  \log p_{{\bm{\theta}}}\left(y_{t_k}\vert y_{t_0:t_{k-1}}\right).\nonumber
\end{align}
Let us define
\begin{equation*}
\mathcal{L}^{l}_{t_k}(\bm{\theta}) := 
\begin{cases}
\log p_{\bm{\theta}}\left(y_{t_0}\right) & \text{if } k=0 \\
\log p_{\bm{\theta}}\left(y_{t_k}\vert y_{t_0:t_{k-1}}\right) & \text{if } k \neq 0
\end{cases}.
\end{equation*}
At time $t_k$, $\mathcal{L}^{l}_{t_k}(\bm{\theta})$ represents the (log) likelihood that $\bm{\theta}$ explains the new observation $y_{t_k}$ given all previous observations (and hyper-parameters) and thus can be regarded as a \textit{local utility} function that we should aim to maximize. However, we should also be mindful of the `utility' of $\bm{\theta}$ in accounting for all previously observed data $(y_{t_0}, \dots, y_{t_{k-1}})$, which can be achieved by ensuring that the new update is not be too far away from $\bm{\theta}_{t_{k-1}}$. Deriving online learning updates as solutions to a constrained optimization problem that provides a trade-off between retaining information from previous iterations and locally reducing a loss function has been advocated for a long time in the online learning literature (\cite{littlestone89, kivinen97,helmbold99,pajmlr,wang10,chang10}). The approach we adopt is largely inspired by the PA-II algorithm of \cite{pajmlr}. We first consider the problem:
\begin{align}
\label{prob:1}
\begin{cases}
\underset{\bm{\theta}}{\text{min}} ~|| \bm{\theta} - \bm{\theta}_{t_{k-1}}||^2 + c_k \xi^2\\
\text{ s.t. } \max\left(-\epsilon -\mathcal{L}^{l}_{t_k}(\bm{\theta}), 0\right) \leq \xi
\end{cases},
\end{align}
where $\epsilon \geq 0$ and $c_k>0$. $c_k$ can be regarded as an aggressiveness parameter: the larger $c_k$ the more weight is given to increasing the local utility compared to retaining information from previous iterations. $\epsilon$ on the other hand is a margin or tolerance parameter that allows for some degree of local sup-optimality, thereby helping to avoid overfitting. Although this problem provides a suitable trade-off, it is tedious to solve analytically,  and it might not have a closed-form solution. However, noting that the objective function of problem (\ref{prob:1}) aims at minimizing $|| \bm{\theta} - \bm{\theta}_{t_{k-1}}||$, we may  replace $\mathcal{L}^{l}_{t_k}(\bm{\theta})$ with its first order Taylor expansion at $\bm{\theta}_{t_{k-1}}$. We adopt the resulting problem for online learning of the hyper-parameters:
\begin{align}
\label{prob:2}
\begin{cases}
\bm{\theta}_{t_k} = \underset{\bm{\theta}}{\text{argmin}} ~|| \bm{\theta} - \bm{\theta}_{t_{k-1}}||^2 + c_k \xi^2\\
\text{ s.t. } \max\left(-\epsilon -\hat{\mathcal{L}}^{l}_{t_k}(\bm{\theta}), 0\right) \leq \xi
\end{cases},
\end{align}
with $\hat{\mathcal{L}}^{l}_{t_k}(\bm{\theta}) = \mathcal{L}^{l}_{t_k}(\bm{\theta}_{t_{k-1}}) + \nabla \mathcal{L}^{l}_{t_k}(\bm{\theta}_{t_{k-1}})^T (\bm{\theta} - \bm{\theta}_{t_{k-1}})$. The above optimization problem is convex and has closed-form solution (see \ref{sct:optim_solu} for the derivation, and \ref{sct:optim_upd} for the derivation of $\nabla \mathcal{L}^{l}_{t_k}(\bm{\theta})$):
\begin{align}
\label{prob:solu}
\bm{\theta}_{t_k} = \bm{\theta}_{t_{k-1}} + c_k\frac{\max\left(-\epsilon -\mathcal{L}^{l}_{t_k}\left(\bm{\theta}_{t_{k-1}}\right), 0 \right)}{1 + c_k \| \nabla \mathcal{L}^{l}_{t_k}(\bm{\theta}_{t_{k-1}}) \|^2}\nabla \mathcal{L}^{l}_{t_k}(\bm{\theta}_{t_{k-1}}).
\end{align}
Noting that the parameters do not change when $\mathcal{L}^{l}_{t_k}(\bm{\theta}_{t_{k-1}}) > -\epsilon$, it follows that our learning scheme can be regarded as a \textit{passive-aggressive} algorithm with \textit{stochastic gradient descent} update (\cite{bottou98}). Moreover, as
\begin{align}
&\hat{\mathcal{L}}^{l}_{t_k}(\bm{\theta}_{t_k}) - \hat{\mathcal{L}}^{l}_{t_k}(\bm{\theta}_{t_{k-1}}) \nonumber \\
&= \max\left(-\epsilon-\mathcal{L}^{l}_{t_k}\left(\bm{\theta}_{t_{k-1}}\right), 0 \right) \frac{c_k \| \nabla \mathcal{L}^{l}_{t_k}(\bm{\theta}_{t_{k-1}}) \|^2}{1 + c_k \| \nabla \mathcal{L}^{l}_{t_k}(\bm{\theta}_{t_{k-1}}) \|^2} \geq 0, \nonumber
\end{align}
each change in the hyper-parameters increases the approximate conditional log likelihood $\hat{\mathcal{L}}^{l}_{t_k}$. Moreover, the one-side $\epsilon$-insensitive loss term $\max\left(-\epsilon-\mathcal{L}^{l}_{t_k}\left(\bm{\theta}_{t_{k-1}}\right), 0 \right)$ guards against overfitting when an update is required. Algorithm \ref{alg:fore_pmgp} summarizes online forecasting and hyper-parameters learning under the $p$M-GP filter.
\begin{algorithm}[ht]
   \caption{Forecasting with the $p$M-GP filter.}
   \label{alg:fore_pmgp}
\begin{algorithmic}
\STATE {\bfseries In:} $\{ y_{t_k}\}_{k \geq 0}$, average sampling frequency $F_s$.
\STATE {\bfseries Out:} Predictive probability density functions $\{ \dots, p(z_t \vert y_{t_0}:y_{t_k}), \dots \}, t \geq t_k$.
\\\hrulefill
\STATE  Set $\omega_i = \frac{1+i}{1+n}\pi F_s$, set the other parameters to $0$.
\FORALL{$(t_k, y_{t_k})$}
\STATE Evaluate $\bm{\theta}_{t_k}$ using Eq. (\ref{prob:solu}).
\STATE Run the prediction step Eqs. (\ref{eq:pred}) with $t=t_k$.
\STATE Run the update step Eqs. (\ref{eq:update}) with $t=t_k$.
\STATE For $t \geq t_k$ get $p(z_t \vert y_{t_0}:y_{t_k})$ from Eqs. (\ref{eq:solu2}, \ref{eq:pred}).
\ENDFOR
\end{algorithmic}
\end{algorithm}

In order to control the aggressiveness of our learning algorithm in a manner that is consistent across datasets, we rewrite the aggressiveness parameter in the form $c_k := c  \frac{\| \bm{\theta}_{t_{k-1}}\|^2}{\left(\epsilon +\hat{\mathcal{L}}^{l}_{t_k}(\bm{\theta}_{t_{k-1}})\right)^2}, c>0$, that arises by normalizing the term $|| \bm{\theta} - \bm{\theta}_{t_{k-1}}||^2$ (resp. $\xi^2$) in the objective function of problem (\ref{prob:2}) by $\| \bm{\theta}_{t_{k-1}}\|^2$ (resp. $\left(\epsilon +\hat{\mathcal{L}}^{l}_{t_k}(\bm{\theta}_{t_{k-1}})\right)^2$).

\section{EXPERIMENTS}
\label{sct:exp}
\begin{figure*}[t!]
\centering
\includegraphics[width=0.4\textwidth]{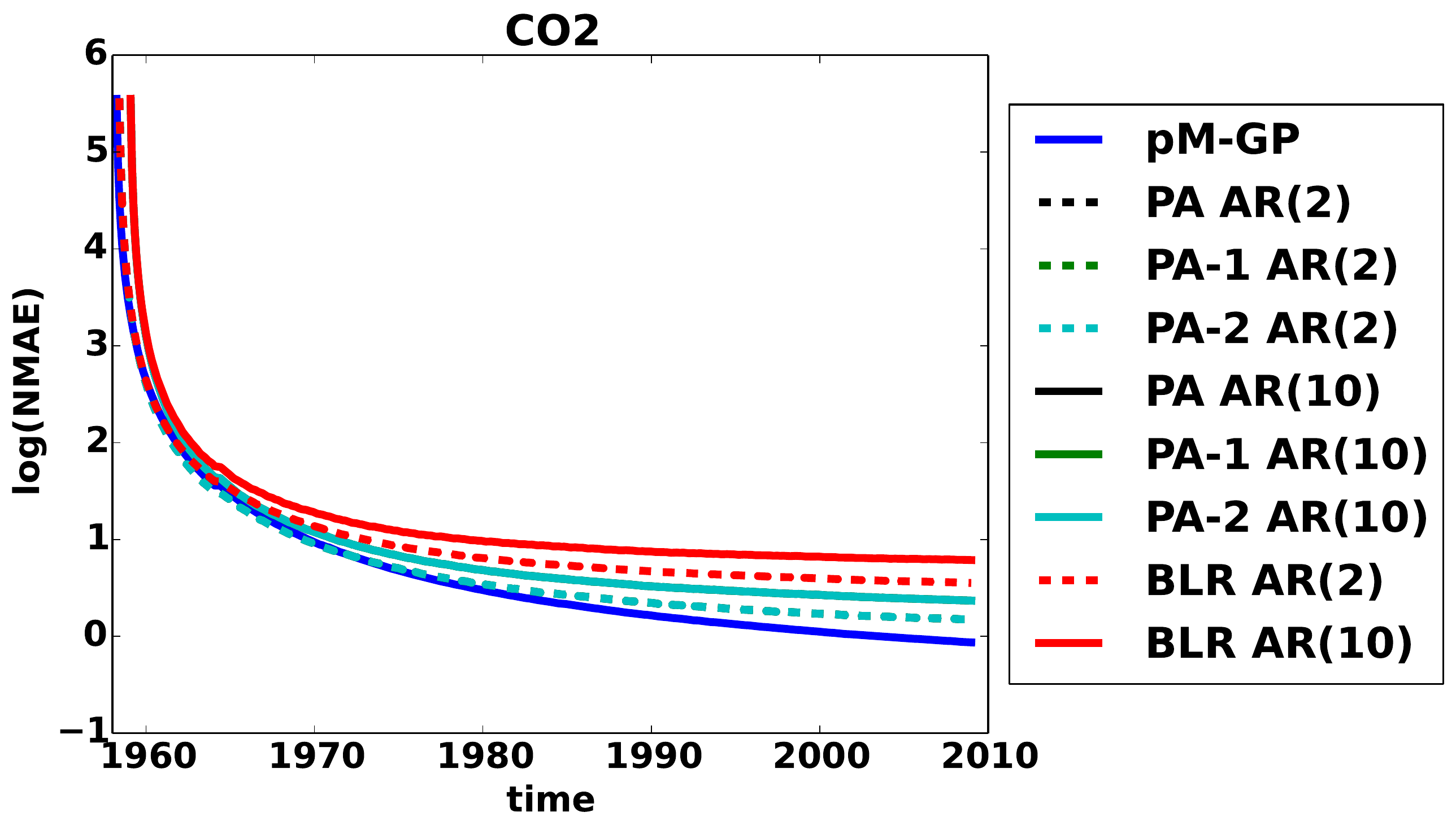}
\includegraphics[width=0.29\textwidth]{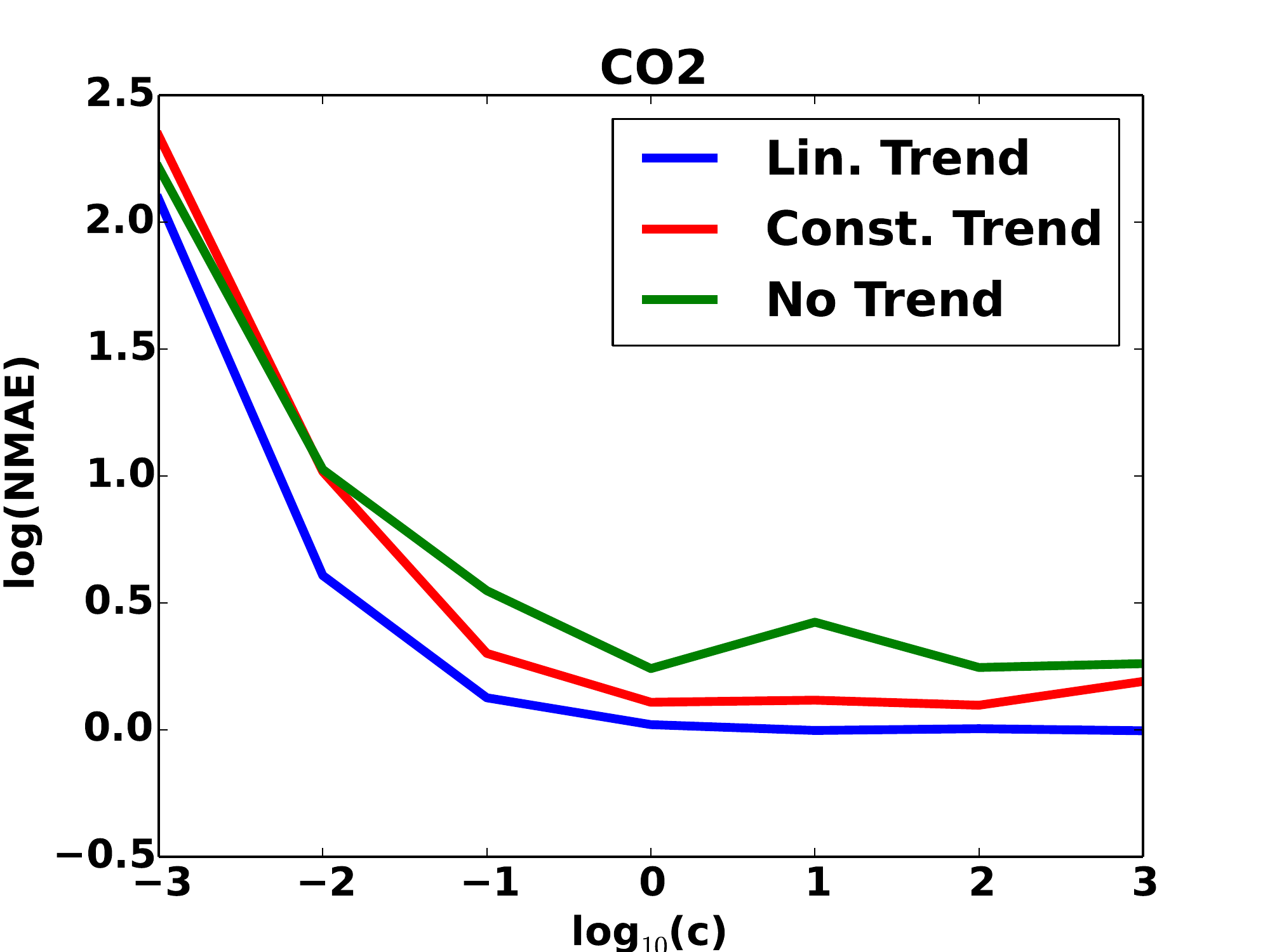}
\includegraphics[width=0.29\textwidth]{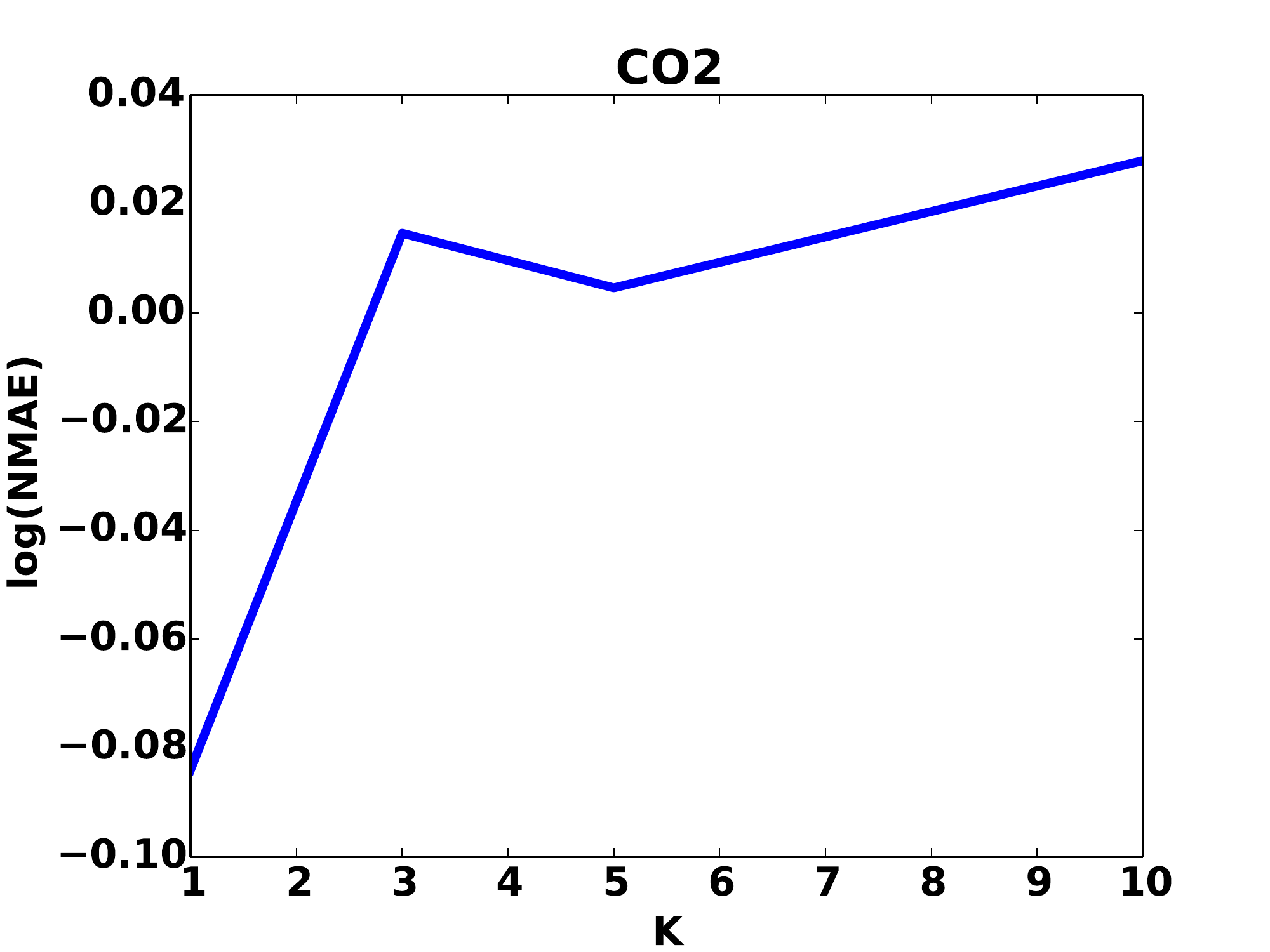}
\caption{CO2 experiment: (left) running normalized mean absolute error (NMAE), (middle) effect of the $p$M-GP aggressiveness parameter $c$ on NMAE, and (right) effect of the number of spectral components $K$ on NMAE.}
\label{fig:co2}
\end{figure*}
\begin{table}[t!]
\caption{Mean $\pm$ 1 standard deviation of normalized absolute errors in the experiments of section \ref{sct:exp}.}
\label{tab:bench}
\begin{center}
\begin{tabular}{lcccc}
\toprule
	 		& CO2 							& Airline \\
\midrule
$p$M-GP		& \textbf{0.52} $\pm$\textbf{0.70} 	& \textbf{0.52} $\pm$ \textbf{0.52} \\
PAs AR(2) 	& 0.77 $\pm$ 0.57 					& 0.94  $\pm$ 0.84 \\
BLR AR(2) 	& 1.31 $\pm$ 0.58					& 0.77  $\pm$ 0.62 \\
PAs AR(10) 	& 1.02 $\pm$ 0.57					& 0.92  $\pm$ 0.81 \\
BLR AR(10) 	& 1.77 $\pm$ 1.00					& 0.77  $\pm$ 0.53 \\
\bottomrule
\end{tabular}
\end{center}
\end{table}
In this section we start by demonstrating that the $p$M-GP filter provides considerably more accurate forecasts than competing fully-online approaches on standard real-life time series datasets. We then experimentally illustrate the sensitivity of the forecasting errors to the normalized aggressiveness parameter $c$, the trend, and the number of spectral components $K$.

\textbf{Benchmarking}: We compare our model to competing fully-online alternatives on the CO2 dataset of \cite{rasswill}, and the airline passengers dataset of \cite{box2015}. We select as competing benchmarks two autoregressive (AR) models, namely AR(2) and AR(10), and we use four different algorithms to learn the autoregressive coefficients online, namely the PA, PA-I and PA-II algorithms of \cite{pajmlr}, and Bayesian online linear regression with i.i.d. standard normal priors on the weights (BLR). For consistency with the $p$M-GP filter, we initialize the autoregressive weights in all four algorithms to $0$. We choose $c=100.0, \epsilon=0.0$ for $p$M-GP models and PA algorithms. We choose a linear trend and twice differentiability ($p=2$) for the $p$M-GP filter. BLR is run with a noise standard deviation of $5\%$ of the sample standard deviation of the corresponding time series. For each model we perform one-step ahead forecast. Mean absolute errors\footnote{Excluding the first forecast, which is the same for all models.} normalized by the standard deviation of increments (NMAE) are reported in Tab. \ref{tab:bench}, where we refer to all three PA algorithms as PAs because they perform identically up to two decimal points. The evolution of the running NMAE as a function of time is illustrated in Fig. \ref{fig:co2} (left) for the CO2 dataset. We note that our approach provides more accurate forecasts than competing alternatives.

\textbf{Sensitivity to Parameters}: The sensitivity of the accuracy of the $p$M-GP filter to the trend, the aggressiveness parameter $c$ and the number of spectral components is illustrated in Fig. \ref{fig:co2} (middle and right) for the CO2 dataset. Overall, for our data size (607 points), it can be seen that the error decreases as a function of $c$. This is in line with the empirical observation of \cite{pajmlr} (Fig. 5) that, for small datasets large $c$ perform better, and $c=0.001$ begins to outperform $c=100.0$ for the PA classification problems the author considered when the data size is in the thousands. Moreover, we note from Fig. \ref{fig:co2} (right) that for small datasets, a large $K$ should not necessarily be preferred to a smaller one, as more samples will typically be required to learn the model hyper-parameters.

\section{DISCUSSION}
\label{sct:con}
We propose an exact state space representation of Gaussian process regression for time series forecasting, under a family of kernels that is dense in the family of all stationary kernels, and that allows decoupling the flexibility of the covariance structure from the differentiability requirement of the latent time series. When hyper-parameters are known, exact GP predictive inference can be performed in constant time and with constant memory requirement. Critically, we propose a novel \textit{passive-aggressive} algorithm for online learning of the model hyper-parameters. The overall approach we refer to as the $p$M-GP filter has constant complexity and memory requirement, and provides more accurate forecasts than fully-online competing alternatives on the standard CO2 and airline passengers datasets. The approach may easily be extended to structured time series prediction, thus allowing for online learning of correlations between time series. Techniques that extend the Kalman filter to leptokurtic measurement noise models (e.g. \cite{aga11}) may also be used out-of-the box to robustify our approach.
\clearpage
%\printbibliography
\bibliography{pm-gp}
\newpage
\begin{appendices}
 \toptitlebar
\begin{center}{\centering \Large\bfseries Appendix}\end{center}
 \bottomtitlebar
\renewcommand\thesection{Appendix \Alph{section}}
Unless stated otherwise, the stochastic processes we consider throughout this appendix are indexed in $\mathbb{R}^+$. To ease notations, we use the superscript $^{(i)}$ to denote the $i$-th order derivative of a function or stochastic process when it exists, or the original function or stochastic process when $i=0$. In the case of stochastic processes, the derivative is to be understood in the mean square sense. Moreover, stationarity of stochastic processes is always meant in the weak sense. Furthermore, observation times are always assumed to be distinct and sorted by index: $t_0 < \dots < t_k < \dots$.

\section{}
\label{app:hyp_disc}
\textbf{Objective:} In this section we  discuss the appropriateness of assuming that the latent time series is a trend-stationary Gaussian process. More precisely, we argue that given \textit{a single path} of a time series \textit{on some bounded interval} $[0, T]$, neither the trend-stationarity assumption nor the Gaussianity assumption can be invalidated.

\textbf{Trend-Stationarity:} Firstly, we note that unless further assumptions are made about the trend other than it being smooth, trend-stationarity cannot be invalidated using discrete observations of a \textit{single path} as we can always find an infinitely smooth trend or mean function $\hat{m}$ (using polynomials for instance), that coincides with all discrete observations, making the observed path highly likely to result from any trend-stationary stochastic process with mean function $\hat{m}$.

In practice however, the trend might be assumed to lie in a parametric family. Even in that case, the stationarity of the residual time series can hardly be invalidated. In effect, most stationarity statistical tests have as null hypothesis that the sample comes from a nonstationary time series. Evidence is then gathered from the sample through the test statistics to determine whether the null hypothesis can be rejected with some confidence, or equivalently if there is enough evidence in the sample to conclude stationarity. Hence, fundamentally, such an approach cannot be used to conclude nonstationarity, as the lack of evidence of stationarity in a \textit{given sample} is not an evidence of nonstationarity of the \textit{underlying process}. It might well be that the sample does not contain enough information to fully characterise the underlying stochastic process so that, had we collected more data, we might have been able to conclude stationarity. It is in the same spirit that \cite{kpss92} noted that `most economic time series are not very informative about whether or not there is a unit root'. As we do not assume that we have enough data to characterize the latent process, our trend-stationarity assumption cannot be invalidated experimentally. 

To illustrate our point, we drew a path from a stationary Gaussian process with mean $0$ and squared exponential covariance function $k(u, v) = e^{-\frac{1}{2}(u-v)^2}$ on $[0, 10]$ discretized with mesh size $0.001$ (see Figure \ref{fig:sgp_is_fine}).
\begin{figure}[h]
\includegraphics[width=0.48\textwidth]{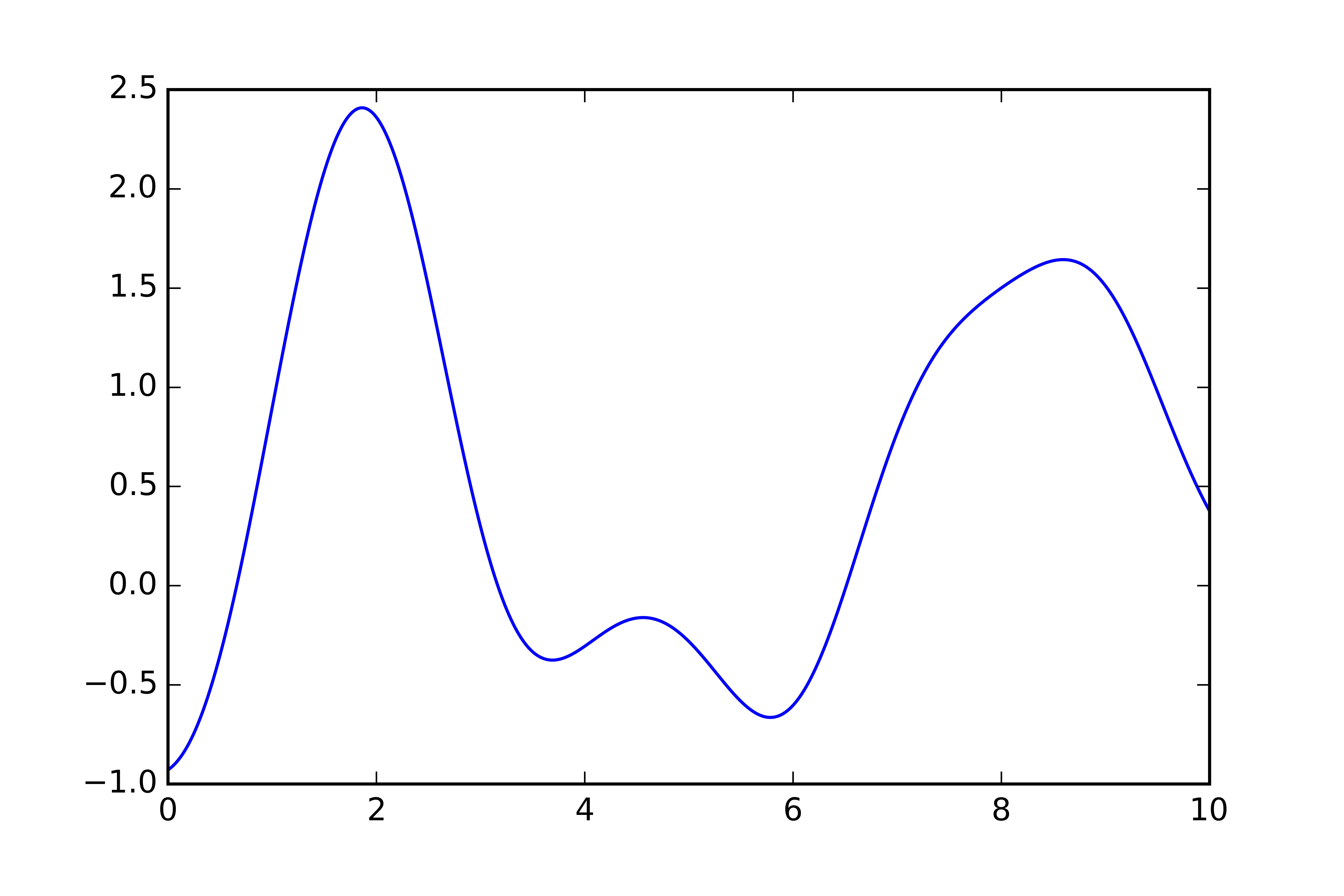}
\caption{Draw from a stationary GP on $[0, 10]$.}
\label{fig:sgp_is_fine}
\end{figure}
We ran three standard stationarity statistical tests on the sample, namely the Augmented Dickey-Fuller test (ADF),  the ADF-GLS test and the Phillips-Perron test.
\begin{table}[h]
\caption{Results of stationarity tests on the time series in Figure \ref{fig:sgp_is_fine}.} 
\label{tbl:sgp_is_fine}
\begin{center}
\begin{tabular}{llll}
Test  			& Statistics 	& p-Value 	& Lags \\
\hline \\
ADF         		&-2.12 	&0.24		&22\\
ADF-GLS       		&-1.02 	&0.28		&22\\
Phillips-Perron          &-1.64 	&0.46		&22
\end{tabular}
\end{center}
\end{table}
As can be seen in Table \ref{tbl:sgp_is_fine}, all three tests failed to find evidence for (to conclude) stationarity with $80\%$ confidence, despite the sample coming from a stationary stochastic process. As all three stationarity tests rely on ergodicity for the first two moments, one might be tempted to think that this could be an indication that the underlying stochastic process is not ergodic. However, a sufficient condition for a mean zero stationary Gaussian process to be ergodic for the first two moments is that its covariance function vanishes as the lag increases \cite[see][\S 13.1]{papoulis02}
\begin{align}
\forall t, ~k(\tau) := \text{cov}(z_{t}, z_{t+\tau}) \underset{\tau \to +\infty}{\longrightarrow}0,\nonumber
\end{align}
and this condition is satisfied by the squared exponential kernel. The real issue at play here is that, given a \textit{finite sample} of a time series, it is hardly possible to say with confidence that it comes from a nonstationary time series.

\textbf{Gaussianity:} Similarly, with one single realisation, it is hardly possible to determine whether the sample comes from a multivariate Gaussian, and thus the Gaussian process assumption cannot be invalidated either. Testing whether a real-valued vector is a draw from a multivariate Gaussian without any assumption on its mean and covariance matrix is as hopeless as testing whether a real-value scalar is a draw from a univariate Gaussian random variable without any assumption on its mean or variance.

\section{}
\label{sct:only_smooth_mgp_}
\textbf{Objective:} In this section we  prove that if a real-valued trend-stationary Gaussian process with a differentiable mean function is mean square differentiable and Markovian, then it has a constant covariance function (or equivalently it is almost surely equal to its mean function plus a constant).

\begin{proof} Let us consider a real-valued trend-stationary, mean square differentiable and Markovian Gaussian process $(z_t)_{t \geq 0}$. Let us denote $(u ,v) \to k(u-v)$ its covariance function. It follows that \begin{equation}
\label{eq:mkv_sm}
\forall t, h, ~\text{cov}\left(z_{t+h}, z_{t-h} \vert z_t \right) = k(2h) - \frac{k(h)^2}{k(0)} = 0,
\end{equation}
where the first equality results from standard Gaussian identities and the second equality results from $(z_t)_{t \geq 0}$ being Markovian. As $(z_t)_{t \geq 0}$ is also assumed to be mean square differentiable and to have a differentiable mean function, the centred Gaussian process $(z_t - \text{E}(z_t))_{t \geq 0}$ is mean square differentiable, or equivalently $k$ is twice differentiable everywhere. Hence, $h \to  k(2h) - \frac{k(h)^2}{k(0)}$ is also twice differentiable at $0$ and has second order derivative $2k^{(2)}(0)$ at $0$. It then follows from Equation (\ref{eq:mkv_sm}) that $k^{(2)}(0) =0$. As $h \to -k^{(2)}(h)$ is the covariance function of $(z^{(1)}_t)_{t \geq 0}$, we have:
\begin{align}
k^{(2)}(h)^2= \text{cov}\left(z^{(1)}_t, z^{(1)}_{t+h}\right)^2 &\leq \text{var}\left(z^{(1)}_t\right) \text{var}\left(z^{(1)}_{t+h}\right)\nonumber \\
 &\leq k^{(2)}\left(0\right)^2 = 0.\nonumber
\end{align}
That is, $\forall h, k^{(2)}\left(h\right)=0$, or equivalently $k$ is a linear function of $h$. Moreover, as $(z_t)_{t \geq 0}$ is trend-stationary $k$ is also bounded\footnote{More precisely, the positive-definiteness of the covariance matrix between $z_t$ and $z_{t+h}$ implies $\forall h, \vert k(h) \vert \leq k(0)$.}. Hence, $k$ is a constant function. This means that for any times $u$ and $v$, $z_u - \text{E}(z_u)$ and $z_v - \text{E}(z_v)$ have the same mean, the same variance and correlation $1$, which implies 
\begin{equation*}
\forall u, v \geq  0, ~ z_u - \text{E}(z_u) \overset{\text{a.s.}}{=} z_v - \text{E}(z_v). 
\end{equation*}
\end{proof}
\section{}
\label{sct:pmat_lgssm}
\textbf{Objective:} In this section we derive the state space representation of the Gaussian process regression model under a trend-stationary latent GP $(z_t)_{t \geq 0}$, with mean function $m$, Mat\'{e}rn-($p+\frac{1}{2}$) kernel, and with a Gaussian white noise $(\epsilon_t)_{t \geq 0}$.
 
\textbf{Derivation:} If we denote $f_t = z_t - m(t)$, $(x_t)_{t \geq 0}$ with $x_t = \left(f_t, f_t^{(1)}, \dots, f_t^{(p)}\right)$ the $p$DGP corresponding to $(f_t)_{t \geq 0}$, and $H=(1, 0, \dots, 0)$, it is easy to see that if we use $x_t$ as state variable, the measurement equation of the state space model should be
\[\forall t, ~y_t = m(t) + H^Tx_t + \epsilon_t.\]
Denoting $K_{u, v}$ the cross-covariance matrix between $x_u$ and $x_v$,  $K_{u \vert v} = L_{u \vert v}L_{u \vert v}^T$ the auto-covariance matrix of $x_u$ conditional on $x_v$, and $t_0$ the initial time, we get the initial state distribution:
\[x_{t_0} \sim \mathcal{N}(0, K_{t_0, t_0}).\]
For observations times $t_0, \dots, t_T$, using Bayes' rule and Corollary \ref{coro:coro}, we get
\begin{align}
p\left(x_{t_0}, \dots, x_{t_T}\right) &= p(x_{t_0}) \prod_{k=1}^{T}p\left(x_{t_k} | x_{t_0:t_{k-1}}\right) \nonumber\\
&= p(x_{t_0}) \prod_{k=1}^{T}p\left(x_{t_k} | x_{t_{k-1}}\right). \nonumber
\end{align}
Moreover, we note that
\[x_{t_k} | x_{t_{k-1}}  \sim \mathcal{N}\left(F_{t_k}x_{t_{k-1}}, K_{t_k|t_{k-1}}\right) \]
with $F_{t_k}=K_{t_k, t_{k-1}} K_{t_{k-1}, t_{k-1}}^{-1}$, and we recall that if $X$ is a vector of $(p+1)$ i.i.d. standard normal, $M$ a deterministic vector and $L$ a square matrix both with appropriate dimensions, then $M + LX$ is a Gaussian vector with mean $M$ and covariance matrix $LL^T$. It then follows that the full state space representation reads

\begin{equation*}
\begin{cases}
x_{t_0} \sim \mathcal{N}(0, K_{t_0, t_0}) \\
x_{t_k} = F_{t_k} x_{t_{k-1}} + L_{t_k \vert t_{k-1}}\xi_{t_k}\\
y_{t_k} = m(t_k) + H^Tx_{t_k} + \epsilon_{t_k}
\end{cases}
\end{equation*}
where $(\xi_{t})_{t \geq 0}$ is a $(p+1)$-dimensional standard Gaussian white noise.

\section{}
\label{sct:pm_gp_fil}
\textbf{Objective:} In this section we prove that the $p$M-GP filter is equivalent to Gaussian process regression under a spectral Mat\'{e}rn kernel (Proposition \ref{prop:eqv}).

\begin{proof} Following the notations of Definition \ref{def:pm-gpf} and Proposition \ref{prop:eqv}, and using the result of \ref{sct:pmat_lgssm}, we first note that by construction the processes $\{ \dots, ({}^i_{c}x_{t})_{t \geq 0}, ({}^i_{s}x_{t})_{t \geq 0}, \dots \}$ are mutually independent and both $({}^i_{c}x_{t})_{t \geq 0}$ and $({}^i_{s}x_{t})_{t \geq 0}$ are $p$DGP with mean $0$ and Mat\'{e}rn covariance function $k_{\text{ma}}(\tau; k_{0i}, l_i, p + \frac{1}{2})$. Writting \[{}^i_{*}x_{t} = \left({}^i_{*}z_{t}, {}^i_{*}z_{t}^{(1)}, \dots, {}^i_{*}z_{t}^{(p)}\right)\] where $*$ is either $c$ or $s$, it is easy to see that the observations process of the $p$M-GP filter can be written down as
\[y_t = z_t + \epsilon_t, \] 
where
\begin{equation}
\label{eq:latent}
z_t = m(t) + \sum_{i=0}^n {}^i_{c}z_t \cos(\omega_i t) + {}^i_{s}z_t \sin(\omega_i t).
\end{equation}
It is also easy to see that $(z_t)_{t \geq 0}$ is a Gaussian process  with mean function $m$, and by mutual independence of $\{ \dots, ({}^i_{c}x_{t})_{t \geq 0}, ({}^i_{s}x_{t})_{t \geq 0}, \dots \}$ we also have that
\begin{align}
\text{cov}(z_u, z_v) &= \sum_{i=0}^n \bigg(\text{cov}({}^i_{c}z_u, {}^i_{c}z_v) \cos(\omega_i u)\cos(\omega_i v) \nonumber \\
&~~~~~~+ \text{cov}({}^i_{s}z_u, {}^i_{s}z_v) \sin(\omega_i u)\sin(\omega_i v) \bigg) \nonumber \\
&= \sum_{i=0}^n k_{\text{ma}}\left(\tau; k_{0i}, l_i, p + \frac{1}{2}\right) \cos\left(\omega_i (u-v)\right). \nonumber
\end{align}
This proves that $(z_t)_{t \geq 0}$ has the same law as $(\hat{z}_t)_{t \geq 0}$. Given that  $(\epsilon_t)_{t \geq 0}$ has the same law as $(\hat{\epsilon}_t)_{t \geq 0}$, it follows that $(y_t)_{t \geq 0}$ has the same law as $(\hat{y}_t)_{t \geq 0}$, which concludes the proof.
\end{proof}

\section{}
\label{sct:solu_fill}
\textbf{Objective:} In this section we derive the solution to the forecasting problem for the $p$M-GP filter (Equations (\ref{eq:solu2}, \ref{eq:pred}, \ref{eq:update})); in particular we provide an iterative algorithm to compute the posterior distribution over future values of the latent time series given historical noisy observations: $p(z_{t} \vert y_{t_k} \dots y_{t_0})$ for $0 < t_0< \dots < t_k < t$.

\textbf{Derivation:} The derivation is almost identical to the Bayesian derivation of the Kalman filter, except for the additional trend term $m$ in the observation equation. 

We note from Equation (\ref{eq:solu}) that the processes $(y_t)_{t \geq 0}$ and $(\bm{x}_t)_{t \geq 0}$ are jointly Gaussian. Hence, $\bm{x}_{t_0}$ and $y_{t_0}$ are jointly Gaussian, which implies $\bm{x}_{t_0} \vert y_{t_0}$ is Gaussian too. Using Equation (\ref{eq:solu}) and Proposition \ref{prop:eqv} we get:
\begin{align}
\forall t > 0, ~\text{E}(\bm{x}_t) = 0, ~ \text{E}(y_t) = m(t), ~ \text{E}(z_t) = m(t) \nonumber
\end{align}
\begin{flalign}
\text{cov}\left(\bm{x}_t, y_t \right) &:= \text{E}\left(\bm{x}_ty_t \right) - \text{E}\left(\bm{x}_t\right) \text{E}\left(y_t\right) \nonumber & \\
&= m(t) \text{E}\left(\bm{x}_t\right) + \text{E}\left(\bm{x}_t\bm{H}_t^T\bm{x}_t\right) + \text{E}\left(\bm{x}_t\epsilon_t\right) \nonumber & \\
&= \bm{K}_{t, t} \bm{H}_t\nonumber &
\end{flalign}
\begin{flalign}
&\text{cov}\left(\bm{x}_t, \bm{x}_t \right) := \bm{K}_{t,t}\nonumber &\\
&\text{var}(y_t) = \bm{H}_t^T\bm{K}_{t,t} \bm{H}_t + \sigma^2\nonumber &\\
&\text{cov}\left(y_t, z_t \right)  = \text{cov}\left(z_t, z_t \right) = \bm{H}_t^T\bm{K}_{t,t} \bm{H}_t. & \nonumber
\end{flalign}
Moreover, using standard Gaussian identities, we get:
\begin{align}
&\text{E}\left(\bm{x}_t \vert y_t \right) =\frac{1}{\bm{H}_t^T\bm{K}_{t,t} \bm{H}_t + \sigma^2} \bm{K}_{t, t} \bm{H}_t \left(y_t-m(t) \right)\nonumber\\
&\text{cov}\left(\bm{x}_t \vert y_t \right) = \bm{K}_{t,t} - \frac{1}{\bm{H}_t^T\bm{K}_{t,t} \bm{H}_t + \sigma^2} \bm{K}_{t, t} \bm{H}_t\bm{H}_t^T\bm{K}_{t, t}^T \nonumber
\end{align}
\begin{align}
& \text{E}\left(z_t \vert y_t\right) \nonumber \\
&= m(t) + \frac{1}{\bm{H}_t^T\bm{K}_{t,t} \bm{H}_t + \sigma^2}\bm{H}_t^T\bm{K}_{t,t} \bm{H}_t \left(y_t-m(t) \right)\nonumber\\
&= m(t) + \bm{H}_t^T\text{E}\left(\bm{x}_t \vert y_t \right) \nonumber
\end{align}
\begin{align}
\text{cov}\left(z_t \vert y_t \right) &=  \bm{H}_t^T\bm{K}_{t,t} \bm{H}_t - \frac{ \bm{H}_t^T\bm{K}_{t,t} \bm{H}_t \bm{H}_t^T\bm{K}_{t,t}^T\bm{H}_t}{\bm{H}_t^T\bm{K}_{t,t} \bm{H}_t + \sigma^2}  \nonumber \\
&=\bm{H}_t^T \text{cov}\left(\bm{x}_t \vert y_t \right) \bm{H}_t. \nonumber
\end{align}
Hence, with 
\begin{align}
\bm{m}_{t_0}^{-} &= 0 \nonumber \\
\bm{P}_{t_0}^{-} &= \bm{K}_{t_0, t_0} \nonumber
\end{align}
and
\begin{align}
\label{eq:solu_def}
\forall t
\begin{cases}
v_t^{-} &:= \bm{H}_t^T \bm{P}_{t}^{-} \bm{H}_t + \sigma^2 \\
\bm{e}_t^{-} &:= y_t - m(t) - \bm{H}_t^T\bm{m}_t^{-} \\
\bm{G}_t &:= \frac{1}{v_t^{-}}\bm{P}_t^{-}\bm{H}_t \\
\bm{m}_t &:= \bm{m}_t^{-} + \bm{e}_t^{-}\bm{G}_t \\
\bm{P}_t &:= \bm{P}_t^{-} - v_t^{-}\bm{G}_t\bm{G}_t^T\\
v_t &:= \bm{H}_t^T\bm{P}_t\bm{H}_t
\end{cases},
\end{align}
we get 
\begin{align}
\bm{x}_{t_0} \vert  y_{t_0} &\sim \mathcal{N}(\bm{m}_{t_0}, \bm{P}_{t_0}) \nonumber \\
z_{t_0} \vert  y_{t_0} &\sim \mathcal{N}(m(t) + \bm{H}_{t_0}^T\bm{m}_{t_0}, v_{t_0}). \nonumber
\end{align}
In order to derive the remaining steps, we need the following lemma \cite[see][Lemma 6, and Appendix G for the proof]{samo_sgp}.
\begin{lemma}
\label{lem:gaussian_message}
Let $X$ be a multivariate Gaussian with mean $\mu_X$ and covariance matrix $\Sigma_X$. If conditional on $X$, $Y$ is a multivariate Gaussian with mean $MX + A$  and covariance matrix $\Sigma_Y^c$ where $M$, $A$ and $\Sigma_Y^c$ do not depend on $X$, then $(X, Y)$ is a jointly Gaussian vector with mean \[\mu_{X;Y}=\begin{bmatrix} \mu_X \\ M\mu_X + A \end{bmatrix}\]and covariance matrix
\[\Sigma_{X;Y}=\begin{bmatrix} \Sigma_X & \Sigma_X M^T \\ M\Sigma_X & \Sigma_Y^c + M \Sigma_X M^T\end{bmatrix}.\]
\end{lemma}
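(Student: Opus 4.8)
The plan is to reduce the statement to the standard fact that an affine image of a jointly Gaussian vector is again jointly Gaussian, by exhibiting $Y$ as an explicit affine function of $X$ together with an independent Gaussian ``innovation''. First I would define $W := Y - MX - A$ and argue that $W$ is a centred Gaussian vector with covariance $\Sigma_Y^c$ that is independent of $X$. The point is that, by hypothesis, the conditional law of $Y$ given $X$ is $\mathcal{N}(MX + A, \Sigma_Y^c)$; subtracting the ($X$-measurable) mean $MX + A$ shows that the conditional law of $W$ given $X = x$ is $\mathcal{N}(0, \Sigma_Y^c)$ for every $x$. Since this conditional law does not depend on $x$, it follows that $W$ is independent of $X$ and that the marginal law of $W$ is $\mathcal{N}(0, \Sigma_Y^c)$.

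With this in hand, $(X, W)$ is jointly Gaussian: its two blocks are independent and each marginally Gaussian, so any linear combination $a^T X + b^T W$ is a sum of two independent Gaussians and hence Gaussian, which is the definition of joint Gaussianity. The covariance of $(X, W)$ is block diagonal, $\mathrm{diag}(\Sigma_X, \Sigma_Y^c)$. I would then write the identity
\[
\begin{bmatrix} X \\ Y \end{bmatrix}
= \begin{bmatrix} I & 0 \\ M & I \end{bmatrix}
\begin{bmatrix} X \\ W \end{bmatrix}
+ \begin{bmatrix} 0 \\ A \end{bmatrix},
\]
which exhibits $(X, Y)$ as an affine image of the jointly Gaussian vector $(X, W)$; joint Gaussianity of $(X, Y)$ is then immediate.

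It remains only to read off the parameters from the affine-map formulas $\text{E}(TZ + b) = T\,\text{E}(Z) + b$ and $\text{cov}(TZ + b) = T\,\text{cov}(Z)\,T^T$, applied with $Z = (X, W)$, $T = \begin{bmatrix} I & 0 \\ M & I \end{bmatrix}$ and $b = (0, A)$. The mean gives $(\mu_X, M\mu_X + A)$, matching $\mu_{X;Y}$, and multiplying out $T\,\mathrm{diag}(\Sigma_X, \Sigma_Y^c)\,T^T$ produces the four blocks $\Sigma_X$, $\Sigma_X M^T$, $M\Sigma_X$ and $M\Sigma_X M^T + \Sigma_Y^c$, matching $\Sigma_{X;Y}$. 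The only genuinely non-mechanical step is the independence claim $W \perp X$, which hinges on the conditional law of $W$ being constant in $X$; the rest is routine linear algebra. As an alternative that establishes both the joint Gaussianity and the exact parameters in a single computation, one could instead compute the joint characteristic function $\text{E}[\exp(i(s^T X + t^T Y))]$ by first conditioning on $X$, which replaces $t^T Y$ by $t^T(MX + A) - \tfrac{1}{2} t^T \Sigma_Y^c t$, and then taking the Gaussian expectation over $X$ at the shifted argument $s + M^T t$; collecting the linear and quadratic terms in $(s, t)$ recovers $\mu_{X;Y}$ and $\Sigma_{X;Y}$, and uniqueness of characteristic functions closes the argument.
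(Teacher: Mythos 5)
Your proof is correct and complete. Note, though, that this paper does not actually contain a proof of the lemma to compare against: it imports the statement from \cite{samo_sgp} (Lemma 6 there, with the proof deferred to Appendix G of that reference), so within this paper the lemma is cited, not proved. That said, your argument is a fully rigorous, self-contained route. The one genuinely probabilistic step is the innovation decomposition $W := Y - MX - A$ together with the claim $W \perp X$, and you justify it correctly: the conditional law of $W$ given $X = x$ is $\mathcal{N}(0, \Sigma_Y^c)$ for every $x$, and constancy of a regular conditional distribution both forces independence and identifies the marginal of $W$. The rest is routine and checks out: joint Gaussianity of $(X, W)$ via linear combinations of independent Gaussians, the affine identity with the unit-lower-triangular map $T = \begin{bmatrix} I & 0 \\ M & I \end{bmatrix}$ carrying $(X, W)$ to $(X, Y)$, and the block product $T\,\mathrm{diag}(\Sigma_X, \Sigma_Y^c)\,T^T$ reproducing the four stated blocks $\Sigma_X$, $\Sigma_X M^T$, $M\Sigma_X$, and $\Sigma_Y^c + M\Sigma_X M^T$. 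Your alternative characteristic-function computation is equally valid and arguably tidier, since it delivers joint Gaussianity and the exact parameters in a single conditioning step closed by uniqueness of characteristic functions. One small point in your favour worth making explicit: both of your arguments work without assuming $\Sigma_X$ or $\Sigma_Y^c$ invertible (degenerate Gaussians are handled by the linear-combination definition and by characteristic functions), whereas a proof via multiplying conditional and marginal densities would have needed nondegeneracy.
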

For $k>0$ and $t>t_{k-1}$ we proceed by iteration. We assume that 
\begin{align}
\label{eq:rec_ass}
\begin{cases}
\bm{x}_{t_{k-1}} \vert  y_{t_0:t_{k-1}} &\sim \mathcal{N}(\bm{m}_{t_{k-1}}, \bm{P}_{t_{k-1}}) \\
z_{t_{k-1}} \vert  y_{t_0:t_{k-1}} &\sim \mathcal{N}(m(t_{k-1}) + \bm{H}_{t_{k-1}}^T\bm{m}_{t_{k-1}}, v_{t_{k-1}})
\end{cases}
\end{align}
using the definitions in Equations (\ref{eq:solu_def}), and we would like to prove that 
\begin{align}
\label{eq:rec_impl}
\begin{cases}
\bm{x}_t \vert y_{t_0 : t_{k-1}}  &\sim  \mathcal{N}(\bm{m}_t^{-}, \bm{P}_t^{-})\\
y_t \vert y_{t_0 : t_{k-1}}  &\sim  \mathcal{N}(m(t) + \bm{H}_t^T\bm{m}_t^{-}, v_t^{-})\\
z_t \vert y_{t_0 : t_{k-1}}  &\sim  \mathcal{N}(m(t) + \bm{H}_t^T\bm{m}_t^{-},v_t^{-}-\sigma^2 ),
\end{cases}
\end{align}
with
\begin{align}
\begin{cases}
\bm{m}_t^{-} &= \bm{F}_t \bm{m}_{t_{k-1}}\nonumber \\
\bm{P}_t^{-} &= \bm{F}_t \bm{P}_{t_{k-1}} \bm{F}_t^T + \bm{K}_{t \vert t_{k-1}} \nonumber.
\end{cases}
\end{align}
To do so, we note that
\begin{align}
&p\left(\bm{x}_t, \bm{x}_{t_{k-1}} \vert y_{t_0 : t_{k-1}}\right) \nonumber \\
&= p\left(\bm{x}_t \vert \bm{x}_{t_{k-1}}, y_{t_0 : t_{k-1}}\right)p\left(\bm{x}_{t_{k-1}} \vert y_{t_0 : t_{k-1}}\right) \nonumber\\
&=  p\left(\bm{x}_t \vert \bm{x}_{t_{k-1}}\right)p\left(\bm{x}_{t_{k-1}} \vert y_{t_0 : t_{k-1}}\right) \nonumber,
\end{align}
where the first equality results from Bayes' rule and the second result from the Markov property of $(\bm{x}_t)_{t \geq 0}$. As 
\begin{align}
\bm{x}_t \vert \bm{x}_{t_{k-1}}, y_{t_0 : t_{k-1}} &\sim \mathcal{N}\left(\bm{F}_t \bm{x}_{t_{k-1}}, \bm{K}_{t \vert t_{k-1}}\right)\nonumber\\
\bm{x}_{t_{k-1}} \vert  y_{t_0:t_{k-1}} &\sim \mathcal{N}(\bm{m}_{t_{k-1}}, \bm{P}_{t_{k-1}})\nonumber,
\end{align}
it follows from Lemma \ref{lem:gaussian_message} that
\begin{align}
\bm{x}_t \vert y_{t_0 : t_{k-1}}  &\sim  \mathcal{N}(\bm{F}_t\bm{m}_{t_{k-1}}, \bm{K}_{t \vert t_{k-1}} + \bm{F}_t\bm{P}_{t_{k-1}}\bm{F}_t^T)\nonumber \\
&\sim  \mathcal{N}(\bm{m}_t^{-}, \bm{P}_t^{-})\nonumber.
\end{align}
Moreover, as $(y_t)_{t \geq 0}$ is a Gaussian process, $y_t \vert y_{t_0 : t_{k-1}}$ is Gaussian. What's more, as $y_t = m(t) + \bm{H}_{t}^T\bm{x}_t + \epsilon_t$ and $\epsilon_t \perp y_{t_0 : t_{k-1}}$, we have that:
\begin{align}
\text{E}(y_t \vert y_{t_0 : t_{k-1}}) &= m(t) + \bm{H}_{t}^T\text{E}\left(\bm{x}_t \vert y_{t_0 : t_{k-1}}\right) \nonumber \\
&= m(t) + \bm{H}_{t}^T\bm{m}_t^{-}\nonumber
\end{align}
and 
\begin{align}
\text{var}(y_t \vert y_{t_0 : t_{k-1}}) &= \bm{H}_{t}^T\text{cov}\left(\bm{x}_t, \bm{x}_t \vert y_{t_0 : t_{k-1}}\right)\bm{H}_{t} + \text{E}(\epsilon_t^2) \nonumber \\
&= \bm{H}_{t}^T\bm{P}_t^{-}\bm{H}_{t} + \sigma^2 \nonumber\\
&= v_t^{-}\nonumber.
\end{align}
The distribution $z_t \vert y_{t_0 : t_{k-1}}$ is obtained in a similar fashion. 

More generally, as the processes $(\bm{x}_t)_{t \geq 0}$ and $(y_t)_{t \geq 0}$ are jointly Gaussian, the random variables $\bm{x}_t$ and $y_t$ are also jointly Gaussian conditional on $y_{t_0 : t_{k-1}}$ and 
\begin{flalign}
&\text{cov}(\bm{x}_t, y_t \vert y_{t_0 : t_{k-1}}) \nonumber& \\
&= \text{cov}\left(\bm{x}_t, m(t) +  \bm{H}_{t}^T\bm{x}_t + \epsilon_t \vert y_{t_0 : t_{k-1}}\right)\nonumber& \\
&= \text{cov}\left(\bm{x}_t, \bm{x}_t \vert y_{t_0 : t_{k-1}}\right)\bm{H}_{t} + \text{E}(\bm{x}_t\epsilon_t \vert y_{t_0 : t_{k-1}}) \nonumber& \\
&= \bm{P}_t^{-}\bm{H}_{t} + \text{E}(\bm{x}_t \vert y_{t_0 : t_{k-1}})\text{E}(\epsilon_t) \nonumber&\\
&=\bm{P}_t^{-}\bm{H}_{t}.\nonumber&
\end{flalign}
Finally, under the assumptions of Equations (\ref{eq:rec_ass}), which we recall imply Equations (\ref{eq:rec_impl}), we would like to prove that 
\begin{align}
\label{eq:rec_impl2}
\begin{cases}
\bm{x}_{t} \vert  y_{t_0:t} &\sim \mathcal{N}(\bm{m}_t, \bm{P}_t) \\
z_{t} \vert  y_{t_0:t} &\sim \mathcal{N}(m(t) + \bm{H}_t^T\bm{m}_t, v_t)
\end{cases}.
\end{align}
We have previously established that $\bm{x}_t, y_t \vert y_{t_0 : t_{k-1}}$ is Gaussian and we have derived the corresponding mean and covariance matrix. Noting that by definition \[\bm{x}_t\vert (y_{t_0 : t_{k-1}}, y_t)  := \bm{x}_t\vert y_{t_0 : t},\]
it follows from standard Gaussian identities that
\begin{flalign}
&\text{E}\left( \bm{x}_t\vert y_{t_0 : t} \right) \nonumber & \\
&= \text{E}\left( \bm{x}_t\vert y_{t_0 : t_{k-1}} \right) + \frac{y_t - \text{E}\left( y_t\vert y_{t_0 : t_{k-1}} \right)}{\text{var}\left( y_t \vert  y_{t_0 : t_{k-1}}\right)} \nonumber & \\ 
&~~~~\times \text{cov}\left( \bm{x}_t, y_t\vert y_{t_0 : t_{k-1}} \right)  \nonumber & \\
&= \bm{m}_t^{-} + \frac{1}{v_t^{-}} \bm{P}_t^{-}\bm{H}_{t} \left(y_t -m(t) - \bm{H}_{t}^T\bm{m}_t^{-}\right) \nonumber& \\
& = \bm{m}_t^{-} +  \bm{G}_t \bm{e}_t^{-} \nonumber& \\
&= \bm{m}_t \nonumber&
\end{flalign}
and 
\begin{flalign}
&\text{cov}\left(\bm{x}_t, \bm{x}_t\vert y_{t_0 : t} \right) \nonumber& \\
&=\text{cov}\left(\bm{x}_t, \bm{x}_t\vert y_{t_0 : t_{k-1}} \right) \nonumber& \\
&- \frac{ \text{cov}\left( \bm{x}_t, y_t\vert y_{t_0 : t_{k-1}} \right)  \text{cov}\left( \bm{x}_t, y_t\vert y_{t_0 : t_{k-1}} \right)^T}{\text{var}\left( y_t \vert y_{t_0 : t_{k-1}}\right)} \nonumber& \\
&= \bm{P}_t^{-} -\frac{1}{v_t^{-}}\bm{P}_t^{-}\bm{H}_{t} \bm{H}_{t}^T \bm{P}_t^{-T} \nonumber& \\
&= \bm{P}_t^{-} - v_t^{-} \bm{G}_t \bm{G}_t^T \nonumber& \\
&= \bm{P}_t.&
\end{flalign}
This proves the first part of Equations (\ref{eq:rec_impl2}). As for the second part, it is a direct consequence of \[z_t = m(t) + \bm{H}_t^T \bm{x}_t.\]

\section{}
\label{sct:optim_solu}
\textbf{Objective:}  In this section we derive the solution to the constrained optimization problem (\ref{prob:2}).

\begin{proof}
We start by recalling the problem of interest:
\begin{align}
\begin{cases}
\bm{\theta}_{t_k} = \underset{\bm{\theta}}{\text{argmin}} ~|| \bm{\theta} - \bm{\theta}_{t_{k-1}}||^2 + c_k \xi^2\\
\text{ s.t. } \max\left(-\epsilon -\hat{\mathcal{L}}^{l}_{t_k}(\bm{\theta}), 0\right) \leq \xi
\end{cases},\nonumber
\end{align}
with $\hat{\mathcal{L}}^{l}_{t_k}(\bm{\theta}) = \mathcal{L}^{l}_{t_k}(\bm{\theta}_{t_{k-1}}) + \nabla \mathcal{L}^{l}_{t_k}(\bm{\theta}_{t_{k-1}})^T (\bm{\theta} - \bm{\theta}_{t_{k-1}})$, $c_k>0$ and $\epsilon \geq 0$.

It is easy to note that when $\mathcal{L}^{l}_{t_k}(\bm{\theta}_{t_{k-1}}) > -\epsilon$, the solution to the optimization problem is $(\bm{\theta}_{t_{k-1}}, 0)$, so that we may focus on the case $\mathcal{L}^{l}_{t_k}(\bm{\theta}_{t_{k-1}}) \leq -\epsilon$. 

For  $\mathcal{L}^{l}_{t_k}(\bm{\theta}_{t_{k-1}}) \leq -\epsilon$, the problem can then be rewritten as the convex optimization problem:
\begin{align}
\begin{cases}
\bm{\theta}_{t_k} = \underset{\bm{\theta}}{\text{argmin}} ~|| \bm{\theta} - \bm{\theta}_{t_{k-1}}||^2 + c_k \xi^2\\
\text{s.t.} -\epsilon -\mathcal{L}^{l}_{t_k}(\bm{\theta}_{t_{k-1}}) - \nabla \mathcal{L}^{l}_{t_k}(\bm{\theta}_{t_{k-1}})^T (\bm{\theta} - \bm{\theta}_{t_{k-1}}) -\xi \leq 0.
\end{cases}\nonumber
\end{align}
As the constraints are linear and the domain of the objective is not restricted, Slater's condition is met and strong duality holds \cite[see][\S 5.2.3]{boyd2004convex}. The minimizer is therefore obtained by setting the gradient of the Lagrangian 
\begin{align}
\label{eq:lagr}
&|| \bm{\theta} - \bm{\theta}_{t_{k-1}}||^2 + c_k \xi^2 + \lambda \bigg(  -\epsilon -\mathcal{L}^{l}_{t_k}(\bm{\theta}_{t_{k-1}}) -\xi \nonumber \\
& - \nabla \mathcal{L}^{l}_{t_k}(\bm{\theta}_{t_{k-1}})^T (\bm{\theta} - \bm{\theta}_{t_{k-1}})  \bigg), \lambda \geq 0
\end{align}
to $0$. Setting the gradient with respect to $\bm{\theta}$ to $0$ we get
%
%s 
\begin{equation}
\label{eq:dr_th}
\bm{\theta}_{t_k} = \bm{\theta}_{t_{k-1}} + \frac{\lambda^*}{2} \nabla \mathcal{L}^{l}_{t_k}(\bm{\theta}_{t_{k-1}}).
\end{equation}
Setting the derivative with respect to $\xi$ to zero, we get 
\begin{equation}
\label{eq:dr_xi}
\xi^* = \frac{\lambda^*}{2c_k}.
\end{equation}
Finally, plugging Equations (\ref{eq:dr_th}) and (\ref{eq:dr_xi}) into Equation (\ref{eq:lagr}), we can rewrite the Lagrangian as
\begin{equation*}
-\left( \frac{\|  \nabla \mathcal{L}^{l}_{t_k}(\bm{\theta}_{t_{k-1}})\|^2}{4} + \frac{1}{4c}\right)(\lambda^*)^2 - \left(\epsilon + \mathcal{L}^{l}_{t_k}(\bm{\theta}_{t_{k-1}})\right)\lambda^*,
\end{equation*}
which reaches its maximum at 
\begin{equation}
\label{eq:dr_lm}
\lambda^* = -2c_k\frac{\epsilon + \mathcal{L}^{l}_{t_k}(\bm{\theta}_{t_{k-1}})}{1 + c_k\|  \nabla \mathcal{L}^{l}_{t_k}(\bm{\theta}_{t_{k-1}})\|^2}.
\end{equation}
Using Equations (\ref{eq:dr_th}) and (\ref{eq:dr_lm}), together with the result established for the case $\mathcal{L}^{l}_{t_k}(\bm{\theta}_{t_{k-1}}) > -\epsilon$, we conclude that 
\begin{equation*}
\bm{\theta}_{t_k} =  \bm{\theta}_{t_{k-1}} + c_k \frac{\max\left(-\epsilon - \mathcal{L}^{l}_{t_k}(\bm{\theta}_{t_{k-1}}), 0\right)}{1 + c_k\|  \nabla \mathcal{L}^{l}_{t_k}(\bm{\theta}_{t_{k-1}})\|^2} \nabla \mathcal{L}^{l}_{t_k}(\bm{\theta}_{t_{k-1}}),
\end{equation*}
which ends the proof.
\end{proof}
\section{}
\label{sct:optim_upd}
\textbf{Objective:} In this section we derive  $\nabla \mathcal{L}^{l}_{t_k}(\bm{\theta})$. We assume that the trend function $m$ is parametric and has parameters $\bm{\beta}$.

\textbf{Derivation:} We recall that $\mathcal{L}^{l}_{t_k}(\bm{\theta})$ is the logarithm of the probability density function of a Gaussian. To ease derivations we denote \begin{equation*}
\bar{m}_{t_k}(\bm{\theta}) :=  m(t_k) + \bm{H}_{t_k}^T\bm{m}_{t_k}^-  \text{ and } \bar{v}_{t_k}(\bm{\theta}) := v_{t_k}^-
\end{equation*}
the mean and variance of the corresponding Gaussian, so that
\begin{equation}
\mathcal{L}^{l}_{t_k}(\bm{\theta}) = -\frac{\log(2\pi)}{2}  - \frac{\log\left(\bar{v}_{t_k}(\bm{\theta})\right)}{2}  - \frac{\left(y_{t_k} - \bar{m}_{t_k}(\bm{\theta}) \right)^2}{2\bar{v}_{t_k}(\bm{\theta})}.
\end{equation}
It then follows that:
\begin{align}
\nabla \mathcal{L}^{l}_{t_k}(\bm{\theta}) &= \left(- \frac{1}{2\bar{v}_{t_k}(\bm{\theta})} + \frac{\left(y_{t_k} - \bar{m}_{t_k}(\bm{\theta}) \right)^2}{2\bar{v}_{t_k}(\bm{\theta})^2}\right) \nabla \bar{v}_{t_k}(\bm{\theta}) \nonumber \\
&+ \left(\frac{y_{t_k} - \bar{m}_{t_k}(\bm{\theta})}{\bar{v}_{t_k}(\bm{\theta})}\right) \nabla \bar{m}_{t_k}(\bm{\theta}),
\end{align}
so that all we need to do is derive the gradients of $\bar{m}_{t_k}$ and $\bar{v}_{t_k}$ (using Equations (\ref{eq:pred}) and (\ref{eq:update})). 

To do so, we recall that $\bm{\theta}$ is made of the parameters of $m$ that we denote $\bm{\beta}$, $\log \sigma$ and $\{\log k_{0i}, \log l_i, \log \omega_i\}_{i=0}^n$.

\textbf{Derivatives with respect to $\bm{\beta}$:}
\begin{flalign}
& \frac{\partial \bar{m}_{t_k}(\bm{\theta})}{\partial \bm{\beta}} = \frac{\partial m}{\partial \bm{\beta}},  ~\frac{\partial \bar{v}_{t_k}(\bm{\theta})}{\partial \bm{\beta}} =  0. \nonumber &
\end{flalign}
\textbf{Derivatives with respect to $\log \sigma$:}
\begin{flalign}
& \frac{\partial \bar{m}_{t_k}(\bm{\theta})}{\partial \log \sigma} = 0 , ~ \frac{\partial \bar{v}_{t_k}(\bm{\theta})}{\partial \log \sigma} = \frac{\partial \bar{v}_{t_k}(\bm{\theta})}{\partial \sigma^2} \frac{\text{d} \sigma^2}{\text{d} \log \sigma} = 2\sigma^2 \nonumber. &
\end{flalign}
\textbf{Derivatives with respect to $\log \omega_i$:}
\begin{flalign}
&\frac{\partial \bar{m}_{t_k}(\bm{\theta})}{\partial \log \omega_i} =  \frac{\partial \bm{H}_{t_k}^T}{\partial \log \omega_i} \bm{m}_{t_k}^- \nonumber & \\
&\frac{\partial \bar{v}_{t_k}(\bm{\theta})}{\partial \log \omega_i} =  \frac{\partial \bm{H}_{t_k}^T}{\partial \log \omega_i} \bm{P}_{t_k}^- \bm{H}_{t_k} +  \bm{H}_{t_k}^T \bm{P}_{t_k}^-  \frac{\partial \bm{H}_{t_k}}{\partial \log \omega_i}\nonumber&
\end{flalign}
where $\frac{\partial \bm{H}_{t_k}^T}{\partial \log \omega_i}$ is identical to $\bm{H}_{t_k}^T$ except that all terms in $\omega_j, ~j\neq i$ are set to $0$, the term in $\cos(\omega_i t_k)$ becomes $-t_k \omega_i \sin(\omega_i t_k)$, and the term in $\sin(\omega_i t_k)$ becomes $t_k \omega_i \cos(\omega_i t_k)$.

\textbf{Derivatives with respect to $\log k_{0i}$ and $\log l_i$ :}
We recall that
\begin{flalign}
\bar{m}_{t_k}(\bm{\theta}) &=  m(t_k)+ \bm{H}_{t_k}^T\bm{F}_{t_k}\bm{m}_{t_{k-1}} \nonumber \\
\bar{v}_{t_k}(\bm{\theta}) &=  \bm{H}_{t_k}^T\bm{P}_{t_k}^-\bm{H}_{t_k} + \sigma^2 \nonumber 
\end{flalign}
with $\bm{P}_{t_k}^- = \bm{F}_{t_k} \bm{P}_{t_{k-1}}\bm{F}_{t_k}^T + \bm{K}_{t_k | t_{k-1}}$. Hence, the only terms that depend on $\log k_{0i}$ and $\log l_i$ are $\bm{F}_{t_k}$ and $\bm{K}_{t_k | t_{k-1}}$, and partial derivatives of $\bar{m}_{t_k}$ and $\bar{v}_{t_k}$ with respect to $\log k_{0i}$ and $\log l_i$  are easily derived from that of $\bm{F}_{t_k}$ and $\bm{K}_{t_k | t_{k-1}}$:
\begin{align}
&\frac{\partial \bar{m}_{t_k}(\bm{\theta})}{\partial \log k_{0i}} = \bm{H}_{t_k}^T\frac{\partial \bm{F}_{t_k}}{\partial \log k_{0i}}\bm{m}_{t_{k-1}} \nonumber \\
&\frac{\partial \bar{m}_{t_k}(\bm{\theta})}{\partial \log l_i} = \bm{H}_{t_k}^T\frac{\partial \bm{F}_{t_k}}{\partial \log l_i}\bm{m}_{t_{k-1}} \nonumber \\
&\frac{\partial \bar{v}_{t_k}(\bm{\theta})}{\partial \log k_{0i}} = \bm{H}_{t_k}^T\frac{\partial \bm{P}_{t_k}^-}{\partial \log k_{0i}}\bm{H}_{t_k} \nonumber \\
&\frac{\partial \bar{v}_{t_k}(\bm{\theta})}{\partial \log l_i} = \bm{H}_{t_k}^T\frac{\partial \bm{P}_{t_k}^-}{\partial \log l_i}\bm{H}_{t_k} \nonumber
\end{align}
with
\begin{align}
& \frac{\partial \bm{P}_{t_k}^-}{\partial \log k_{0i}} = \frac{\partial \bm{K}_{t_k | t_{k-1}}}{\partial \log k_{0i}} +\frac{\partial \bm{F}_{t_k}}{\partial \log k_{0i}} \bm{P}_{t_{k-1}}\bm{F}_{t_k}^T +  \nonumber \\
& + \bm{F}_{t_k} \bm{P}_{t_{k-1}}\frac{\partial \bm{F}_{t_k}^T}{\partial \log k_{0i}} \nonumber \\
& \frac{\partial \bm{P}_{t_k}^-}{\partial \log l_i} = \frac{\partial \bm{K}_{t_k | t_{k-1}}}{\partial \log l_i} +\frac{\partial \bm{F}_{t_k}}{\partial \log l_i} \bm{P}_{t_{k-1}}\bm{F}_{t_k}^T +  \nonumber \\
& + \bm{F}_{t_k} \bm{P}_{t_{k-1}}\frac{\partial \bm{F}_{t_k}^T}{\partial \log l_i} \nonumber.
\end{align}

\end{appendices}
\end{document}